\documentclass[11pt]{article}
\usepackage{yandex-research}

\usepackage[utf8]{inputenc} 
\usepackage[T1]{fontenc}    
\usepackage{hyperref}       
\usepackage{url}            
\usepackage{fontawesome5}   
\usepackage{booktabs}       
\usepackage{amsfonts}       
\usepackage{nicefrac}       
\usepackage{microtype}      
\usepackage{bbm}           
\usepackage{xcolor}         
\usepackage{amsmath}
\usepackage{graphicx}
\usepackage{multirow}
\usepackage{wrapfig}
\usepackage[skins,breakable]{tcolorbox}
\usepackage{textcomp}
\usepackage{xcolor}

\tcbuselibrary{listings}
\usepackage{xcolor}
\usepackage{textcomp}
\usepackage[skins,breakable]{tcolorbox}
\tcbuselibrary{listings}

\definecolor{gsmframeheader}{RGB}{128,128,128}
\definecolor{gsmframebackground}{RGB}{245,245,245}
\definecolor{gsmerrorbackground}{RGB}{255,225,225}
\definecolor{gsmerrortext}{RGB}{155,30,30}
\definecolor{gsmcorrectbackground}{RGB}{222,242,225}
\definecolor{gsmcorrecttext}{RGB}{28,100,49}

\newtcolorbox{gsm8kframe}[3]{
  enhanced,
  breakable,
  colback=gsmframebackground,
  colframe=gsmframeheader,
  colbacktitle=gsmframeheader,
  coltitle=white,
  boxrule=0.5pt,
  arc=1.5mm,
  boxsep=0pt,
  left=5pt, right=5pt, top=3pt, bottom=5pt,
  toptitle=3pt, bottomtitle=3pt,
  titlerule=0pt,
  fontupper=\rmfamily\small,
  fonttitle=\rmfamily\small\bfseries,
  title={Sampling Steps: #1\hfill{\mdseries Accuracy: #2}},
  before upper={%
    \setlength{\parindent}{0pt}%
    \setlength{\parskip}{0pt}%
    \textbf{Question:} #3\par\smallskip
    \textbf{Generated answer:}\par\nobreak
  },
  before skip=8pt,
  after skip=8pt
}

\lstdefinestyle{gsmcode}{
  language=Python,
  basicstyle=\ttfamily\footnotesize,
  keywordstyle=\color{black},
  commentstyle=\color{black},
  stringstyle=\color{black},
  identifierstyle=\color{black},
  columns=fullflexible,
  keepspaces=true,
  tabsize=4,
  showspaces=false,
  showstringspaces=false,
  showtabs=false,
  showlines=true,
  upquote=true,
  numbers=none,
  breaklines=true,
  breakatwhitespace=false,
  breakautoindent=true,
  breakindent=1em,
  postbreak={},
  aboveskip=0pt,
  belowskip=0pt,
  mathescape=false,
  texcl=false,
  escapeinside={(*@}{@*)},
  escapebegin=\ttfamily\footnotesize,
  literate={’}{{\textquoteright}}1
           {‘}{{\textquoteleft}}1
           {“}{{\textquotedblleft}}1
           {”}{{\textquotedblright}}1
           {–}{{\textendash}}1
           {—}{{\textemdash}}1
           {…}{{\textellipsis}}1
           {−}{{\textminus}}1
           {×}{{\texttimes}}1
           {÷}{{\textdiv}}1
           {€}{{\texteuro}}1
           {£}{{\textsterling}}1
           { }{{~}}1
}

\newtcblisting{gsmcode}[4][]{
  enhanced,
  breakable,
  listing only,
  listing options={style=gsmcode},
  colback=gsmframebackground,
  colframe=gsmframeheader,
  colbacktitle=gsmframeheader,
  coltitle=white,
  boxrule=0.5pt,
  arc=1.5mm,
  boxsep=0pt,
  left=5pt, right=5pt, top=3pt, bottom=5pt,
  toptitle=3pt, bottomtitle=3pt,
  titlerule=0pt,
  fonttitle=\rmfamily\small\bfseries,
  title={Sampling Steps: #2\hfill{\mdseries Accuracy: #3\quad Executed answer: #4}},
  before skip=8pt,
  after skip=8pt,
  #1
}

\usepackage{algorithm}
\usepackage{listings}
\usepackage{caption}
\usepackage{mathtools}

\definecolor{analysisblue}{RGB}{29,117,170}
\definecolor{analysisorange}{RGB}{250,125,14}
\definecolor{analysispurple}{RGB}{120,100,230}

\usepackage{amsthm}

\newtheorem{proposition}{Proposition}

\definecolor{codegreen}{RGB}{42,125,128}
\definecolor{codeblue}{RGB}{0,45,180}
\definecolor{codepink}{RGB}{220,40,130}
\definecolor{codeorange}{RGB}{220,95,0}

\lstdefinestyle{pseudocode}{
    language=Python,
    basicstyle=\ttfamily\small,
    keywordstyle=\color{codeblue}\bfseries,
    commentstyle=\color{codegreen}\bfseries,
    stringstyle=\color{codeorange},
    showstringspaces=false,
    columns=fullflexible,
    keepspaces=true
}

\title{Revisiting Tabular Deep Regression: \\ Objectives and the Limits of Global \\ Early Stopping}
\author{%
\textbf{Danil Rudenko}\textsuperscript{1,2,*},
\textbf{Georgii Bulgakov}\textsuperscript{1,2,*},
\textbf{Nikolay Kartashev}\textsuperscript{3,1,*}, \\
\textbf{Ivan Rubachev}\textsuperscript{1,3}, 
\textbf{Daniil Krasylnikov}\textsuperscript{1,\textdagger},
\textbf{Artem Babenko}\textsuperscript{1,3}
\affiliations{%
\textsuperscript{1}Yandex Research,\enspace
\textsuperscript{2}BRAInLab,\enspace
\textsuperscript{3}HSE University\enspace
}
}

\begin{document}
\maketitle
\begingroup 
\renewcommand{\thefootnote}{\fnsymbol{footnote}}
\footnotetext[1]{Equal contribution.}
\footnotetext[2]{Work done while at Yandex Research.}
\endgroup


\begin{abstract}
While deep learning models for tabular data have recently received significant research attention, the choice of objective functions for regression tasks remains an understudied degree of freedom. Several lines of research offer alternatives to the standard Mean
Squared Error (MSE), but differences in datasets, backbones, and
evaluation protocols make their practical benefits difficult to compare. In this work, we address this gap by performing a systematic re-evaluation of regression objectives across a broad collection of established tabular datasets and multiple neural backbones. Our comparison reveals that many proposed objectives provide limited or inconsistent gains over a properly tuned MSE baseline, while quantile loss yields consistent aggregate improvements over MSE across the tested backbones. We then identify a limitation that remains even with this strong objective: regions with different levels of target noise can reach their best predictive performance at different training epochs. Consequently, a single global early-stopping checkpoint can impose a compromise between uncertainty regions, even when the objective can represent heteroskedastic target distributions. Motivated by this limitation, we introduce a heteroskedasticity-aware extension of quantile loss that uses input-dependent label smoothing as a potential remedy and improves over quantile loss on several backbones. Overall, our results provide a modern empirical picture of regression objectives for tabular deep learning and highlight how differences in optimal stopping times across uncertainty regions can limit the effectiveness of global early stopping. The code is available at
\url{https://github.com/yandex-research/tabular-regression}.

\end{abstract}

\section{Introduction}

Despite the growing adoption of deep learning models for tabular data, in practice, it is not entirely clear what objective function should be used to solve regression tasks with DNNs. While the design of advanced regression objectives is an active research direction \citep{imani2018improving,stewart2025building,farebrother2024stopregressingtrainingvalue,song2025decodingbased,zhu2025gradientalignedregressionpairwise,zhang2023improvingdeepregressionordinal,zha2023rank,zhang2025improvingdeepregressiontightness}, the traditional Mean Squared Error (MSE) remains widely used. Existing studies evaluate alternative objectives on different datasets, backbones, and tuning protocols, making it difficult to establish which alternatives provide reliable improvements in practice.

We revisit traditional and modern regression objectives on 41
established tabular datasets with extensive hyperparameter tuning,
focusing on conventional DNNs (pretrained PFNs are
outside our scope). We find that most objectives do not consistently improve over a
carefully tuned MSE baseline. Two approaches stand out on MLP:
Gaussian-smoothed binning objectives
\citep{imani2018improving,stewart2025building}, which represent targets
as distributions over discrete bins, and quantile loss
\citep{Koenker.Bassett1978}, which predicts conditional quantiles of
$p(y\mid x)$ via the pinball objective. Of these, quantile loss yields
more consistent gains across different backbones. Related evidence comes from the
PFN literature: TabICLv2 \citep{tabiclv2} adopts quantile loss for
regression and reports benefits over MSE and binning; our findings
complement this with a systematic comparison in conventional tabular
models.

Having identified quantile loss as a strong regression objective,
we investigate a limitation that remains under standard training:
regions with different levels of data uncertainty can reach their
best predictive performance at different epochs. Consequently, a single early-stopping checkpoint can impose a compromise between uncertainty regions. Such input-dependent variation in target noise is common in tabular
data \citep{kartashev2025unveiling}. We examine this conflict through controlled synthetic experiments and uncertainty-based checkpoint selection on real datasets.

Motivated by this limitation, we propose \textit{Adaptive Quantile Regression} (AdaQR), which incorporates jointly learned, input-dependent smoothing into quantile training. Effectively, by reducing the sensitivity of quantile updates to individual label realizations in high-uncertainty regions, adaptive smoothing aims to delay overfitting there while allowing learning to continue in lower-uncertainty regions. AdaQR is designed as one potential remedy for the mismatch induced by a shared stopping point, which improves upon standard quantile loss on several DNN backbones as we show in our experiments.

Our main contributions are:
\begin{itemize}
    \item We benchmark 18 regression objectives on MLP across 41 datasets and evaluate selected objectives on three advanced tabular DNN backbones, using up to 100 hyperparameter tuning trials and the Muon optimizer \citep{jordan2024muon,gorishniy2026benchmarking}.

    \item We show that most objectives do not consistently beat tuned MSE. Gaussian-smoothed binning objectives and quantile loss stand out on MLPs, while quantile loss provides more consistent gains across the tested backbones.

    \item We identify a limitation of global early stopping in tabular
regression: synthetic experiments reveal different preferred
stopping epochs across noise levels, while real-data experiments show
that selecting checkpoints separately for cohorts defined by estimated
uncertainty can improve predictive performance over global selection.

    \item We propose a new AdaQR objective that provides a jointly learned smoothing mechanism for investigating how uncertainty-aware training can mitigate stopping conflicts and evaluate this objective on established tabular backbones.
\end{itemize}

\section{Related Work}
\label{sec:related_work}

\textbf{Regression objectives.}
Alternatives to MSE include robust losses \citep{huber1964robust}, target discretization \citep{imani2018improving,stewart2025building,farebrother2024stopregressingtrainingvalue}, conditional distribution modeling \citep{bishop1994mixture,Koenker.Bassett1978,song2025decodingbased,han2022card}, and representation regularization \citep{zha2023rank,zhang2025improvingdeepregressiontightness,zhang2023improvingdeepregressionordinal}. We compare representative methods under a common tabular evaluation protocol and describe the individual objectives in the following sections.

\textbf{Early stopping and overfitting.}
Different parts of a model or subsets of the data can reach their best generalization performance at different training epochs. \citet{heckel2020early} analyze different bias--variance tradeoffs across feature directions and network layers, showing that adjusting learning rates can improve early-stopping performance. \citet{bao2023token} observe that some token groups are already overfitting while others remain underfit at the selected checkpoint in sequence-to-sequence models. \citet{berta2025rethinking} show that calibration and refinement errors are minimized at different stages of training, including in tabular classification. We extend the uncertainty-based analysis of tabular models
\citep{kartashev2025unveiling} to training trajectories and checkpoint
selection, examining how input-dependent noise relates to the epochs
at which different regions achieve their best predictive performance.

\textbf{Heteroskedasticity-aware objectives and adaptive smoothing.}
A large body of work models input-dependent uncertainty in regression \citep{kendall2017uncertainties,immer2023effective}. However, joint mean and variance estimation can introduce optimization difficulties \citep{seitzer2022on,wong2023understanding}, motivating methods that preserve the mean-prediction accuracy of a corresponding MSE-trained model \citep{stirn2023faithful}. Related work on quantile regression investigates smoothing and local scale adaptation. \citet{tan2022high} study convolution smoothing of the pinball loss to improve optimization and statistical estimation. \citet{fasiolo2021fast} derive covariate-dependent smoothing scales
from a preliminary location--scale fit, while \citet{jung2015efficient}
estimate local scales using preliminary quantile or absolute-residual
regressions. In distributional reinforcement learning, \citet{malekzadeh2024robust} propose a noise-motivated generalization of quantile-Huber loss with adaptive parameter adjustment. AdaQR combines an energy-distance objective with an input-dependent
smoothing head learned jointly with the neural quantile predictor.
We investigate this formulation through its effects on
uncertainty-dependent training dynamics and tabular point prediction.

\section{Revisiting regression objectives}
\label{sec:revisiting}

\textbf{Notation.}
We consider supervised regression on tabular data.
Let $D = \{(x_i, y_i)\}_{i=1}^{n}$ denote a dataset, where
$x_i \in \mathbb{R}^d$ is the feature vector and
$y_i \in \mathbb{R}$ is the target.
We denote the neural model by
$f_\theta: \mathbb{R}^d \rightarrow \mathbb{R}^{d'}$,
where the output dimension $d'$ depends on the objective and
need not equal one.
For pointwise objectives, the loss on an observation is
$l(f_\theta(x_i), y_i)$.

\subsection{Losses}
  \label{sec:losses}

\begin{figure*}[t!]
\centering
\includegraphics[width=0.9\textwidth]{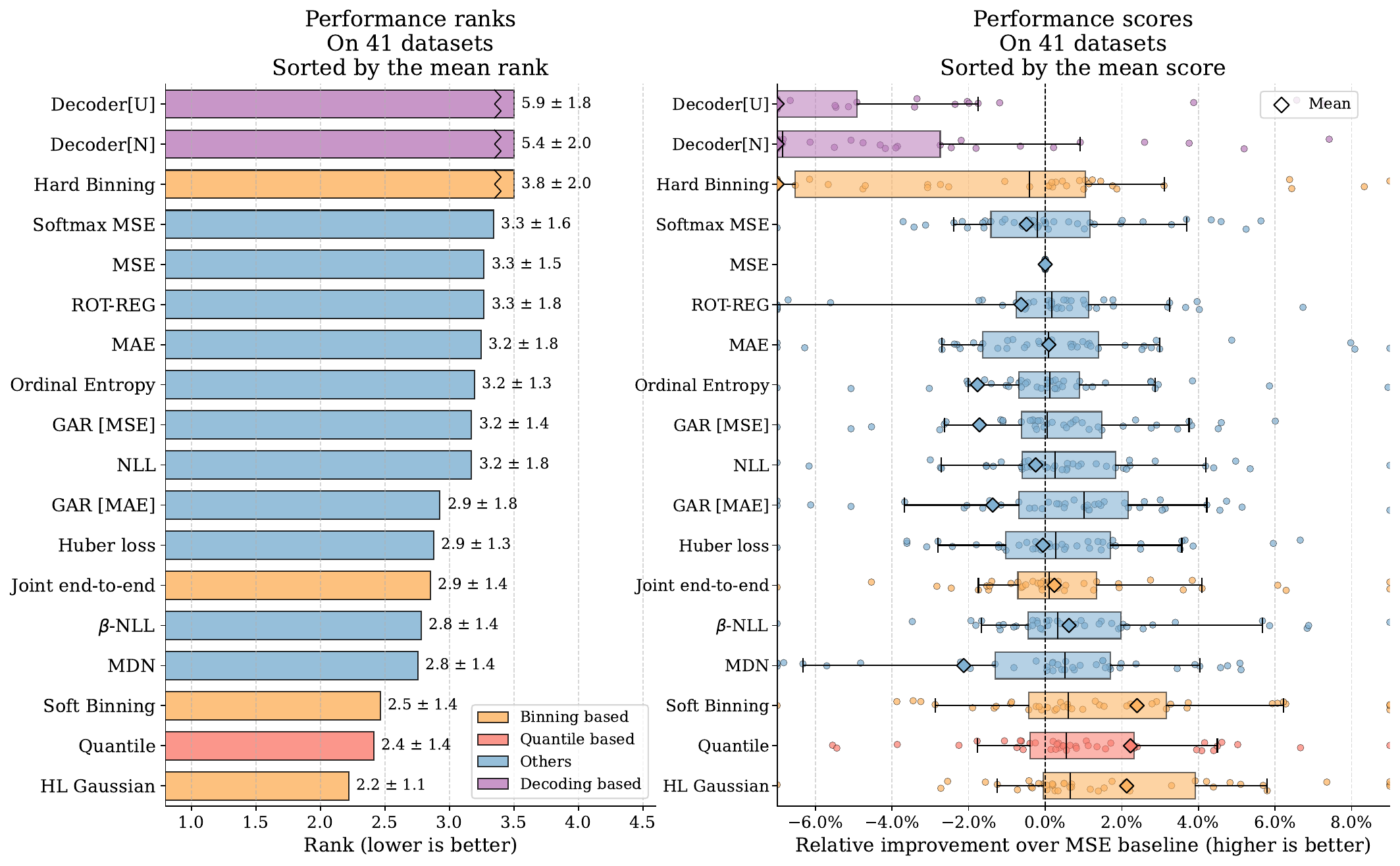}
\caption{Performance comparison of different objective functions using an MLP backbone across 41\\ BeyondArena datasets. (\emph{Left}) Average rank performance across all datasets. (\emph{Right}) Relative RMSE improvement over the standard MSE baseline, visualizing the magnitude of performance gains or degradation. The box plots visualize the 10th, 25th, 50th, 75th, and 90th percentiles of the jitter plots. Notation details are provided in the beginning of Section~\ref{sec:results}.}
\label{fig:vertical-revisiging}
\end{figure*}

  We compare the following regression objectives.

\textbf{Traditional losses.}
MSE penalizes squared errors, MAE penalizes absolute errors, and
Huber loss \citep{huber1992robust} combines quadratic penalties for
small errors with linear penalties for large ones.

\textbf{Binning-based losses.}
These methods discretize the target range into bins and train a
categorical prediction head to match a target distribution.
Hard Binning assigns each target to its nearest bin, while
Soft Binning distributes probability mass according to Gaussian
weights evaluated at bin centers \citep{stewart2025building}.
HL-Gaussian instead integrates a truncated Gaussian density over
each bin \citep{imani2018improving}.
Joint end-to-end encoding learns the target distribution using
an auxiliary encoder and a reconstruction objective
\citep{stewart2025building}.
All four methods return the predicted expectation over bin centers.
Softmax-MSE uses the same prediction head but directly minimizes
the squared error of this expectation \citep{stewart2025building}.

\textbf{Quantile regression.}
The model predicts a grid of conditional quantiles using the pinball
loss \citep{Koenker.Bassett1978}; their average serves as the
scalar prediction.

\textbf{Likelihood-based objectives.}
Gaussian NLL jointly learns a conditional mean and variance.
$\beta$-NLL reweights this objective by a detached power of the
predicted variance to address optimization difficulties
\citep{seitzer2022on}.
Mixture Density Networks model the conditional distribution as
a Gaussian mixture \citep{bishop1994mixture}.
The autoregressive approach of \citet{song2025decodingbased}
instead predicts tokenized targets using next-token cross-entropy.
We evaluate both its normalized and unnormalized target encodings.

\textbf{Additional objectives and regularizers.}
GAR combines pointwise and pairwise terms to capture target values
and their differences \citep{zhu2025gradientalignedregressionpairwise}.
Ordinal Entropy \citep{zhang2023improvingdeepregressionordinal}
and ROT-REG \citep{zhang2025improvingdeepregressiontightness}
augment regression training with representation regularization,
using entropy-based and optimal-transport-based terms, respectively.

We provide a detailed description of all objectives in
Appendix~\ref{app:losses}.

\subsection{Experiments}
\label{sec:revisiting_exps}

\textbf{Datasets.} We evaluate a subset of 41 regression datasets from BeyondArena \citep{purucker2026beyond}, excluding three computationally demanding datasets to keep hyperparameter tuning feasible within our 24-hour budget. We use the first predefined split of each retained dataset. Further details are provided in Appendix~\ref{app:datasets}.

\subsubsection{Implementation details}

\textbf{Model backbone.} We first compare regression objectives on the simplest MLP backbone, then evaluate selected objectives on additional advanced architectures. Depending on the objective, the prediction head outputs either a scalar, $\hat{y}_i = f_\theta(x_i) \in \mathbb{R}$, or a vector parameterizing the corresponding predictive output.

\textbf{Experimental setup.} We follow the evaluation protocol of \citet{gorishniy2025tabm},
detailed in Appendix~\ref{app:protocol}.
Unless stated otherwise, we use Muon-based training
\citep{jordan2024muon}, following recent evidence of its effectiveness
on tabular data \citep{gorishniy2026benchmarking}.
For each dataset and objective, we tune the model, objective, and
optimizer hyperparameters on the validation set.
We then retrain the selected configuration from scratch across
15 random seeds and report test RMSE.
Method comparisons use the mean and standard deviation across
these seeds, as described in Appendix~\ref{app:comparison}.

\subsubsection{Results}
\label{sec:results}

We evaluate all methods following the protocol above and report the results in Figure~\ref{fig:vertical-revisiging}. 
For readability, Decoder[U] and Decoder[N] denote the unnormalized and normalized decoders from \citet{song2025decodingbased}, respectively; GAR[MSE] and GAR[MAE] denote GAR variants with MSE- and MAE-based formulations \citep{zhu2025gradientalignedregressionpairwise}, respectively.

\textbf{Key takeaways.}
\begin{enumerate}
    \item Most objectives do not provide consistent improvements
    over a carefully tuned MSE.

    \item Soft Binning \citep{stewart2025building},
    HL-Gaussian \citep{imani2018improving}, and quantile loss
    \citep{Koenker.Bassett1978} stand out in the aggregate results,
    improving upon MSE in terms of both ranks and relative RMSE improvement.
\end{enumerate}

These findings also hold under AdamW
\citep{loshchilov2018decoupled}.
Table~\ref{tab:mlp_losses_vs_optimizers} shows that HL-Gaussian
and quantile loss improve over MSE in aggregate with this optimizer.
The Muon configurations achieve larger median gains over the
common MSE$+$AdamW reference, highlighting the importance of
optimizer choice when evaluating regression objectives.

\begin{table}[h]
\centering
\setlength{\tabcolsep}{2mm}
\caption{
Objective and optimizer comparison on MLP across 41 datasets.
Gain is the median relative RMSE improvement over the common
MSE$+$AdamW baseline; positive values indicate improvement.
W/T/L counts wins, ties, and losses under the standard-deviation-based
comparison rule in Appendix~\ref{app:comparison}.
Bold denotes the largest median gain within each optimizer.
}
\label{tab:mlp_losses_vs_optimizers}
\begin{tabular}{lcc@{\hspace{7mm}}cc}
\toprule
& \multicolumn{2}{c}{\textbf{AdamW}}
& \multicolumn{2}{c}{\textbf{Muon}} \\
\cmidrule(lr){2-3}\cmidrule(lr){4-5}
\textbf{Objective}
& \textbf{Gain (\%)}
& \textbf{W / T / L}
& \textbf{Gain (\%)}
& \textbf{W / T / L} \\
\midrule
MSE
& \textemdash & \textemdash
& +1.50 & 18 / 19 / 4 \\
HL-Gaussian
& \textbf{+0.78} & 19 / 19 / 3
& \textbf{+2.34} & 26 / 11 / 4 \\
quantile loss
& +0.67 & 15 / 23 / 3
& +2.18 & 24 / 13 / 4 \\
\bottomrule
\end{tabular}
\vspace{-10pt}
\end{table}

\subsubsection{Advanced backbones}
\label{sec:advanced_models}

Having identified promising objectives on MLP, we examine whether
their benefits transfer to other state-of-the-art backbone architectures.
We retain HL-Gaussian as a representative binning-based objective
and compare it and quantile loss against MSE.

Following the same evaluation protocol, we train TabPack with
periodic feature embeddings \citep{gorishniy2026tabpack},
TabM-mini with PLE-Q embeddings \citep{gorishniy2025tabm},
and RealMLP \citep{holzmuller2024realmlp}.
Implementation details are provided in
Appendix~\ref{app:advanced_models}.

Table~\ref{tab:quantile_backbones} shows that quantile loss provides
more consistent gains across the tested backbones than HL-Gaussian.
On TabPack, HL-Gaussian has more losses than wins against MSE,
despite a small positive median improvement; on RealMLP, its
median gain is also limited.
In contrast, quantile loss achieves positive median improvements
and more wins than losses on every tested backbone.
These results support quantile loss as a practical alternative
to MSE, while showing that its benefits remain dataset-dependent.

\begin{table}[t]
\centering
\setlength{\tabcolsep}{2mm}
\caption{
Quantile loss yields consistent aggregate improvements across
the tested backbones.
Each objective is compared against MSE on the same backbone.
Gain is the median relative RMSE improvement; positive values
indicate improvement.
W/T/L follows the comparison rule in Appendix~\ref{app:comparison}.
Bold denotes the larger median gain within each backbone,
with both values highlighted when equal at the reported precision.
}
\label{tab:quantile_backbones}
\begin{tabular}{lcc@{\hspace{7mm}}cc}
\toprule
& \multicolumn{2}{c}{\textbf{HL-Gaussian}}
& \multicolumn{2}{c}{\textbf{quantile loss}} \\
\cmidrule(lr){2-3}\cmidrule(lr){4-5}
\textbf{Backbone}
& \textbf{Gain (\%)}
& \textbf{W / T / L}
& \textbf{Gain (\%)}
& \textbf{W / T / L} \\
\midrule
MLP
& \textbf{+0.65} & 19 / 17 / 5
& +0.55 & 21 / 15 / 5 \\
TabPack
& +0.06 & 8 / 20 / 13
& \textbf{+0.46} & 18 / 21 / 2 \\
RealMLP
& +0.12 & 13 / 17 / 11
& \textbf{+0.40} & 20 / 16 / 5 \\
TabM
& +0.54 & 17 / 19 / 5
& \textbf{+0.58} & 20 / 16 / 5 \\
\bottomrule
\end{tabular}
\end{table}

\section{Quantile regression under heteroskedasticity}
\label{sec:analysis}

The benchmark in Section~\ref{sec:revisiting_exps} identifies several regression objectives that consistently outperform
the MSE baseline. While HL-Gaussian and quantile regression (QR) perform comparably on a plain
MLP, QR is more robust across backbones (Table~\ref{tab:quantile_backbones}). Therefore, we focus on QR for the remainder
of our analysis.

QR predicts conditional quantiles without assuming a parametric family for $p(y \mid x)$, allowing the predicted distribution's shape and scale to vary across inputs. This flexibility is particularly beneficial for tabular data, where the conditional variance $\mathrm{Var}[y \mid x]$, often termed \emph{data uncertainty}, changes substantially with $x$ (i.e., the data is \emph{heteroskedastic}). \citet{kartashev2025unveiling} identify regions of low and high data uncertainty in several established tabular datasets, and demonstrate that dataset-wise aggregate scores often obscure how methods perform across these different regions.

However, modeling input-dependent distributions does not guarantee that regions with different noise levels achieve their optimal predictive performance at the same epoch. Standard global early stopping selects a single checkpoint based on average validation error, applying it uniformly to all inputs. We first examine how prediction errors evolve during training across different uncertainty levels (Section~\ref{subsec:epochs}). We then test whether selecting checkpoints separately for different uncertainty cohorts improves upon doing so for random cohorts of the same size (Section~\ref{subsec:cohort_es}). Together, these experiments reveal an optimization challenge that distributional flexibility alone cannot resolve, motivating the adaptive objective introduced in Section~\ref{sec:ada_quantile}.

Both experiments in this section require a per-point estimate of data uncertainty and are most interpretable when this estimate is exact and test targets are noise-free. To demonstrate the claims of this section in a controlled environment that has these traits, we construct a synthetic dataset $y = f(x) + \varepsilon$,
with noise added only to training targets, so test RMSE directly measures
recovery of $f = \mathbb{E}[y \mid x]$. We use two variants sharing exactly the same
$f$ and $x$: \emph{heteroskedastic}, with $\varepsilon \sim \mathcal{N}(0,
\sigma(x)^2)$, and \emph{homoskedastic}, with $\varepsilon \sim
\mathcal{N}(0, \sigma^2)$, serving as a control. Importantly, for this dataset, levels of true data uncertainty $\sigma^2$ are known to us exactly. See
Appendix~\ref{app:synth_data} for details.

To estimate uncertainty on real data we use a CatBoost model \citep{prokhorenkova2018catboost} trained via Gaussian negative log-likelihood on the real datasets, utilizing the predicted $\sigma(x)$ as our uncertainty estimate. To prevent data leakage, this estimator is fit exclusively on the training split and applied to validation and test points during inference.

\subsection{Optimal stopping epochs vary by uncertainty region}
\label{subsec:epochs}

In this subsection we investigate whether models require different stopping epochs for regions with varying data uncertainty. This requires analyzing the test error disaggregated by uncertainty level across epochs.

The \textit{uncertainty plot} \citep{kartashev2025unveiling} provides
this perspective. Figure~\ref{fig:uncplot}(a) shows an MLP trained with
the quantile loss on the heteroskedastic synthetic dataset. Each curve
corresponds to one epoch and displays the point-wise change in squared
error relative to a reference epoch (the one with the best aggregate
validation MSE), plotted against data uncertainty. Before the reference
epoch, the model performs better on high-uncertainty points and worse
on low-uncertainty ones; afterward, this trend reverses. In other words,
the model starts overfitting high-uncertainty regions before it finishes
fitting low-uncertainty ones, and the reference epoch is merely a
compromise between these competing dynamics.

To verify that this pattern stems from varying noise levels rather than
intrinsic properties of $x$ or $f(x)$, Figure~\ref{fig:uncplot}(c) repeats
the experiment on the homoskedastic variant (same $f$, constant noise
scale, with samples sorted as in the heteroskedastic version). The effect
disappears: performance changes across the displayed epochs are largely unrelated to the uncertainty ordering inherited from the heteroskedastic variant.

Ideally, a model would retain its early-epoch error on high-uncertainty
points while achieving its late-epoch error on low-uncertainty points,
improving upon the QR baseline. However, Figure~\ref{fig:uncplot}(a) only illustrates this misalignment
on synthetic data and does not quantify the resulting performance loss.
Section~\ref{subsec:cohort_es} closes both gaps.

\begin{figure*}[t!]
    \centering
    \includegraphics[width=0.95\textwidth]{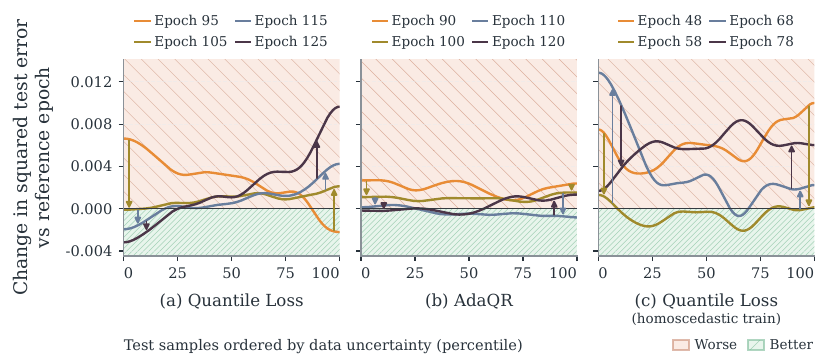}
\caption{\textit{In this synthetic experiment, heteroskedastic noise creates a mismatch between the stopping epochs preferred by low- and high-uncertainty regions, while AdaQR mitigates this effect.}
Uncertainty plots \citep{kartashev2025unveiling} for an MLP trained with (a) quantile loss and (b) AdaQR on the heteroskedastic dataset, and (c) quantile loss on the homoskedastic counterpart.
Each curve shows the smoothed point-wise difference in squared test error relative to the reference epoch with the best aggregate validation MSE; negative values indicate improvement. Test samples are ordered by increasing uncertainty in the heteroskedastic dataset, with the same ordering used in all panels.
Under heteroskedastic noise, earlier epochs perform relatively better on high-uncertainty points and later epochs on low-uncertainty points, indicating that different uncertainty regions favor different stopping times. This separation is reduced with AdaQR and disappears when the noise is homoskedastic.}
    \label{fig:uncplot}
\end{figure*}

\subsection{Measuring the impact of epoch misalignment}
\label{subsec:cohort_es}
To quantify the performance penalty implied by the epoch misalignment in Figure~\ref{fig:uncplot}(a), we evaluate a hypothetical strategy of \emph{per-cohort early stopping}. This approach requires dividing the validation and test sets into groups of comparable data uncertainty, termed \emph{cohorts}. To construct these, we compute the $K - 1$ empirical $\tfrac{k}{K}$-quantiles of the estimated uncertainty $\hat\sigma(x)$ on the validation set. These thresholds are used to split both the validation and test sets into $K$ cohorts, ordered from lowest (cohort $1$) to highest (cohort $K$) uncertainty. Deriving these thresholds strictly from the validation set prevents data leakage, yielding equal-sized validation cohorts and test cohorts that are balanced up to sampling noise. With the cohorts established, we save a model checkpoint at every epoch during training. For each uncertainty cohort $k$, we select the checkpoint $e_k^*$  that minimizes the validation MSE restricted specifically to that group. During inference, each test point is routed to the checkpoint $e_k^*$ corresponding to its uncertainty cohort. Because this approach assigns exactly one checkpoint to each test example, the resulting aggregated test RMSE reflects the empirical benefit of tailoring stopping times to local uncertainty levels.

To accurately isolate this benefit, we cannot simply compare the resulting RMSE to standard global early stopping. While per-cohort selection allows the model to choose $K$ different epochs, each choice relies on only a fraction ($1/K$) of the validation data. To create a fair baseline, we apply the exact same procedure to \textbf{random cohorts} of matching size, created by randomly permuting the points before splitting. These random groups suffer the same penalty of smaller validation sets but contain no uncertainty information. Therefore, comparing uncertainty cohorts directly to random cohorts isolates the advantage of grouping by local noise levels.

Table~\ref{tab:cohorts_real} presents the relative improvements in test RMSE when using uncertainty-based (U) and random (R) cohorts compared to standard global (G) early stopping. To minimize the negative effects of smaller validation sets, we restrict this experiment to the seven largest datasets from our benchmark, reporting results for $K = 5$ and $K = 10$. The results demonstrate that uncertainty-based selection improves median RMSE over global selection by 0.18\% and 0.114\% for five and ten cohorts, respectively. Under our comparison rule, both configurations record three wins, four ties, and no losses across the seven datasets. Furthermore, the drop in overall performance when increasing from $K = 5$ to $K = 10$ cohorts is consistent with increased checkpoint-selection noise from smaller validation subsets.

The table also includes the homoskedastic variant of our synthetic dataset, which shares the same features $x$ and noiseless target function $f(x)$, but features a constant noise scale $\sigma$. This allows us to apply the exact same cohort splits used for the heteroskedastic variant. If $\sigma(x)$ were merely acting as a proxy for partitioning the feature space into regions of similar inputs, both variants would show similar gains. Instead, performance on the homoskedastic version worsens when applying uncertainty-based splits, while the opposite is true for the heteroskedastic variant. Thus, the benefit stems specifically from adapting to how varying label noise alters training dynamics, not from the spatial structure of the inputs.

Per-cohort early stopping, however, is not a practical solution: it requires storing numerous checkpoints and assigning test points to cohorts during inference. These results motivate adapting the objective to delay overfitting in high-uncertainty regions while allowing learning to continue in low-uncertainty regions. Section~\ref{sec:ada_quantile} introduces AdaQR as a potential way to reduce this mismatch during training.

\begin{table}[t]
\centering
\caption{Cohort-based vs. global checkpoint selection for an MLP trained with quantile loss. Improv. (\%) is the relative reduction in mean test RMSE using uncertainty (U/G) or random (R/G) cohorts compared with global selection. All checkpoints are selected on validation data. Bottom rows summarize available comparisons among the seven largest benchmark datasets, excluding synthetic data; W/T/L follows Appendix~\ref{app:comparison}. Results are computed from mean test RMSEs over 15 random seeds.}
\label{tab:cohorts_real}
\begingroup
\setlength{\tabcolsep}{4pt}
\begin{tabular}{@{}lcc@{\hspace{4mm}}cc@{}}
\toprule
& \multicolumn{2}{c}{\textbf{$K=5$}}
& \multicolumn{2}{c}{\textbf{$K=10$}} \\
\cmidrule(lr){2-3}\cmidrule(lr){4-5}
\textbf{Dataset / summary}
& \textbf{U/G} & \textbf{R/G}
& \textbf{U/G} & \textbf{R/G} \\
\midrule
Synthetic (heteroskedastic)
& $+0.318$ & $-0.041$ & $+0.286$ & $-0.118$ \\
Synthetic (homoskedastic)
& $-0.019$ & $-0.021$ & $-0.048$ & $-0.078$ \\
\midrule
\multicolumn{5}{@{}l}{\textit{Seven largest benchmark datasets}} \\
\textbf{Median Improv. (\%)}
& $+0.180$ & $-0.029$ & $+0.114$ & $-0.003$ \\
\textbf{W / T / L}
& 3/4/0 & 0/7/0 & 3/4/0 & 0/7/0 \\
\bottomrule
\end{tabular}
\endgroup
\end{table}

\section{Adaptive quantile regression}
\label{sec:ada_quantile}

To address the stopping-time mismatch identified in Section~\ref{sec:analysis}, we introduce AdaQR, a quantile objective with jointly learned, input-dependent smoothing inspired by Adaptive Label Smoothing \citep{krothapalli2020adaptive}. By applying stronger smoothing in high-uncertainty regions, we aim to postpone overfitting there.

Up to quantile discretization, quantile loss is proportional to the
squared Energy Distance between the predicted distribution and a
Dirac $\delta$ distribution at the observed target $y$.
Appendix~\ref{sec:quantile_crps} details this connection through the
Continuous Ranked Probability Score (CRPS).

To incorporate adaptive smoothing, we replace this point mass with
a Gaussian distribution $\mathcal{N}(y,\sigma_l^2)$ centered around
the target, where $\sigma_l^2$ is a learned variance predicted from
the input $x$.

We design our loss function such that a larger conditional variance
$\mathrm{Var}[y \mid x]$  yields a larger predicted $\sigma_l^2$. Specifically, the loss function we use is
$\mathcal{L}(F,y,\sigma_l)
=
\frac{1}{2}\mathcal{D}^2
\bigl(F,\mathcal{N}(y,\sigma_l^2)\bigr)
+\lambda\sigma_l,$ where $\mathcal{D}$ denotes the Energy Distance
\citep{szekely2003statistics}.
The first term is half of the squared Energy Distance,
where the predicted distribution $F$ is estimated through
the predicted quantiles $q_i$.
The second term acts as a regularization penalty, scaling
the predicted smoothing parameter $\sigma_l$ by a tunable
hyperparameter $\lambda$.
As shown in Appendix~\ref{subapp:adaqr_formula},
when expressed in terms of specific predicted quantiles $q_i$,
our loss becomes:
\begin{equation}
\frac{1}{N}\sum_{i=1}^N
\left(
(q_i-y)
\left(2\Phi\!\left(\tfrac{q_i-y}{\sigma_l}\right)-1\right)
+
\frac{2\sigma_l}{\sqrt{2\pi}}
e^{-\frac{(q_i-y)^2}{2\sigma_l^2}}
\right)
-
\frac{1}{N}\sum_{i=1}^N (2\tau_i-1)q_i
-
\frac{\sigma_l}{\sqrt{\pi}}
+\lambda\sigma_l,
\label{eq:loss}
\end{equation}
where $N$ is the number of predicted quantiles,
$\Phi$ is the CDF of the standard normal distribution,
and $\tau_i$ represents the probability level of the $i$-th quantile.
Assuming Gaussian label noise, Gaussian model predictions, sufficient quantile resolution, and convergence to the Bayes optimum,
Proposition~\ref{prop:adaqr} shows that the optimal smoothing
variance satisfies $\sigma_l^2 \propto \sigma_t^2$,
where $\sigma_t^2=\mathrm{Var}[y \mid x]$.
The full statement and proof are provided in
Appendix~\ref{subapp:adaqr_proofitself}.

\begin{table}[t]
\centering
\setlength{\tabcolsep}{2.5pt}
\caption{Predictive performance on 28 BeyondArena datasets with at least
1,000 training samples.
Gain is the median relative RMSE improvement over MSE on the same
backbone; positive values indicate improvement.
W/T/L summarizes comparisons with MSE using the rule in
Appendix~\ref{app:comparison}.
Bold denotes the largest median gain within each backbone.
AdaQR$^\dagger$ includes auxiliary scale regularization
(Appendix~\ref{app:anchor}).}
\label{tab:adaqr_results}
\begin{minipage}[t]{0.39\linewidth}
\centering
\begin{tabular}{lrc}
\toprule
\multicolumn{3}{c}{\textbf{MLP}} \\
\cmidrule(lr){1-3}
\textbf{Objective} & \textbf{Gain (\%)} & \textbf{W / T / L} \\
\midrule
NLL           & +0.02          & 9 / 12 / 7 \\
$\beta$-NLL   & +0.25          & 10 / 11 / 7 \\
quantile loss & +0.47          & 16 / 10 / 2 \\
AdaQR         & \textbf{+0.66} & 16 / 10 / 2 \\
\bottomrule
\end{tabular}
\end{minipage}
\hfill
\begin{minipage}[t]{0.59\linewidth}
\centering
\begin{tabular}{llcc}
\toprule
\textbf{Backbone} & \textbf{Objective}
& \textbf{Gain (\%)} & \textbf{W / T / L} \\
\midrule
TabPack & quantile loss   & +0.22          & 12 / 14 / 2 \\
        & AdaQR          & \textbf{+0.25} & 14 / 14 / 0 \\
\midrule
RealMLP & quantile loss   & +0.35          & 15 / 11 / 2 \\
        & AdaQR$^\dagger$ & \textbf{+0.82} & 19 / 6 / 3 \\
\midrule
TabM    & quantile loss   & +0.39          & 12 / 12 / 4 \\
        & AdaQR          & \textbf{+0.97} & 14 / 11 / 3 \\
\bottomrule
\end{tabular}
\end{minipage}
\end{table}

These assumptions provide a theoretical framework for AdaQR's
uncertainty adaptation but are not strict prerequisites for
empirical success. In practice, AdaQR adds a scalar output
that parameterizes $\sigma_l$ alongside the predicted quantiles.
This output is used only during training; at inference,
the point prediction is obtained by averaging the quantiles,
as in standard quantile regression. Thus, AdaQR introduces
no additional test-time cost. Implementation details are
provided in Appendix~\ref{app:adaqr_implementation}.

\subsection{AdaQR results}

We first evaluate AdaQR's predictive accuracy. In the following
sections, we analyze the role of adaptive smoothing, the quality
of the learned uncertainty estimates, and AdaQR's effect on the
global stopping compromise. Our main comparison covers 28 datasets with at least 1,000 training
samples, excluding the Tiny category of BeyondArena
\citep{purucker2026beyond}. Since AdaQR trains an additional
$\sigma_l$ logit jointly with the main predictions and requires
tuning $\lambda$, which can complicate optimization on very small
datasets, we exclude such datasets from the main comparison.

We follow our standard evaluation protocol with Muon-based training
on MLP, TabPack \citep{gorishniy2026tabpack},
RealMLP \citep{holzmuller2024realmlp}, and TabM-mini with PLE-Q
embeddings \citep{gorishniy2025tabm}.
On MLP, we additionally evaluate Gaussian NLL and $\beta$-NLL
as heteroskedastic baselines.
For RealMLP, we use a scale-regularized variant, denoted by
AdaQR$^\dagger$, motivated by the uncertainty-estimation analysis and described in
Appendix~\ref{app:anchor}.

\label{sec:adaqr_results}

Table~\ref{tab:adaqr_results} shows that AdaQR achieves positive
median RMSE improvements over MSE on all four backbones.
Its median gains over MSE are also larger than those of quantile
loss, with the clearest differences on TabM and RealMLP
and a small difference on TabPack.
These results support AdaQR as a useful training alternative,
while showing that its benefits vary across backbones.

\subsection{Importance of Adaptive Smoothing}

\begin{table}[h]
\centering
\caption{Ablation of adaptive scale prediction on RealMLP. Gain is the median relative RMSE improvement over MSE. 
W/T/L follows the comparison rule in Appendix~\ref{app:comparison}.}
\label{tab:adaptive_scale_ablation}
\vspace{6pt}
\begin{tabular}{lcc}
\toprule
\textbf{Objective} & \textbf{Gain (\%)} & \textbf{W / T / L} \\
\midrule
quantile loss & +0.35 & 15 / 11 / 2\\
AdaQR$^{\dagger}$ & +0.82 & 19 / 6 / 3\\
Global $\sigma_l$ & +0.14 & 11 / 10 / 7\\
\bottomrule
\end{tabular}
\end{table}

To highlight the importance of adaptivity in AdaQR, we conduct an ablation experiment in which the per-sample $\sigma_l(x)$ is replaced with a single global $\sigma_l$ shared across all instances. Rather than being learned jointly
with the model, the global $\sigma_l$ is fixed and tuned as a hyperparameter. We perform this ablation on RealMLP, a strong-performing tabular
model. We exclude the Tiny subset as described in Section~\ref{sec:adaqr_results}. All other settings follow the
standard experimental protocol described in
Section~\ref{sec:revisiting_exps}.

The substantial
degradation with a global $\sigma_l$ highlights the importance of
instance-wise adaptivity in AdaQR.

\subsection{Uncertainty Estimation}
\begin{figure*}[h!]
    \centering
    \includegraphics[width=\linewidth]{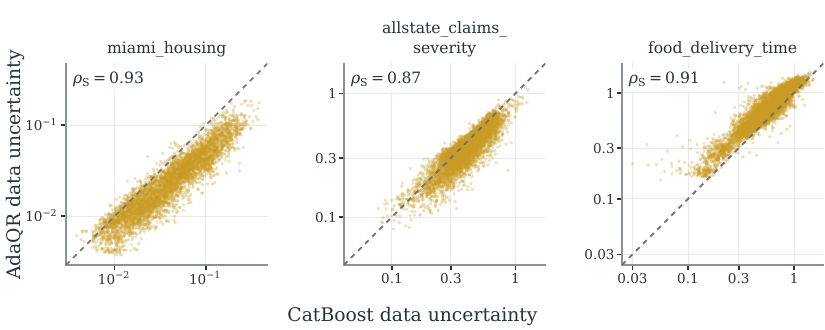}
    \caption{AdaQR versus CatBoost data uncertainty on three
    real-world datasets from the Beyond benchmark (test splits;
    logarithmic axes). AdaQR estimates are averaged across TabM
    branches after conversion to variances using \eqref{eq:sigma_l}.
    Dashed lines indicate $y=x$; $\rho_{\mathrm S}$ denotes
    Spearman's rank correlation computed over all test inputs
    for each dataset. At most 8,000 randomly sampled inputs
    are displayed per dataset.}
    \label{fig:real_uncertainty}
\end{figure*}
We examine whether AdaQR learns meaningful data uncertainty on
  real-world tasks. Since ground-truth uncertainty is unavailable,
  we use CatBoost \citep{prokhorenkova2018catboost} as a trusted reference.
  Figure~\ref{fig:real_uncertainty} shows strong correlations
  between their uncertainty estimates, suggesting AdaQR captures
  meaningful input-dependent variation. See Appendix~\ref{app:real_uncertainty} for additional examples and implementation details.

\subsection{Effect on the Global Stopping Compromise}
\label{sec:adaqr_stopping}

The synthetic results in Figure~\ref{fig:uncplot} suggest
that AdaQR alleviates the mismatch between optimal stopping
epochs across uncertainty regions. To examine whether this
also holds on real-world data, we repeat the cohort-based
experiment from Section~\ref{subsec:cohort_es} for AdaQR on
the same seven largest benchmark datasets, using the MLP
backbone. We compare the benefit of cohort-based over global
checkpoint selection under AdaQR and quantile loss.
\begin{table}[h]
\centering
\caption{Cohort-based vs. global checkpoint selection for an MLP
trained with quantile loss (QR) or AdaQR. Improv. (\%) is the
relative reduction in the mean test RMSE compared with global
selection under the same objective. Results summarize seven
real-world datasets, using 15 random seeds. All checkpoints
are selected on validation data; W/T/L follows
Appendix~\ref{app:comparison}.}
\label{tab:adaqr_cohorts}
\begingroup
\setlength{\tabcolsep}{4pt}
\begin{tabular}{@{}lcc@{\hspace{4mm}}cc@{}}
\toprule
& \multicolumn{2}{c}{\textbf{$K=5$}}
& \multicolumn{2}{c}{\textbf{$K=10$}} \\
\cmidrule(lr){2-3}\cmidrule(lr){4-5}
\textbf{Summary}
& {quantile loss} & {AdaQR}
& {quantile loss} & {AdaQR} \\
\midrule
\textbf{Median Improv. (\%)}
& $+0.180$ & $+0.092$ & $+0.114$ & $+0.085$ \\
\textbf{W / T / L}
& 3/4/ 0 & 1/6/0 & 3/4/0 & 2/5/0 \\
\bottomrule
\end{tabular}
\endgroup
\end{table}

Table~\ref{tab:adaqr_cohorts} shows a lower benefit
from cohort-based selection under AdaQR for both values
of $K$. This benefit decreases on five of the seven datasets
in both settings (see extended results in Appendix~\ref{app:extended_results}),
consistent with our findings on  synthetic data.

\section{Conclusion}

In this paper, we systematically revisit regression objectives for
tabular deep learning, comparing 18 objectives on 41 datasets and
evaluating leading approaches across multiple backbones. Quantile loss
emerges as a consistently strong alternative to carefully tuned MSE.
Yet a strong objective does not eliminate the limitations of global
early stopping: regions with different noise levels can peak at
different training epochs, making a shared checkpoint a compromise.
Motivated by this, we introduce AdaQR, which uses jointly learned,
input-dependent smoothing as one possible remedy and improves
predictions in several settings, with benefits varying across backbones
and datasets. Our findings highlight that progress in tabular regression
requires attention not only to what models optimize, but also to how
different parts of the data are learned over time.

\section{Limitations}

Our study identifies uncertainty-dependent stopping conflicts as an opportunity to improve tabular regression, but our understanding of the underlying training dynamics remains incomplete. AdaQR helps mitigate these stopping conflicts: on real datasets, cohort-based checkpoint selection yields smaller improvements for AdaQR than for standard quantile regression (Section~\ref{sec:adaqr_stopping}), yet still provides additional gains. This residual benefit suggests that adaptive smoothing does not fully resolve differences in preferred stopping times across uncertainty regions. Further work is needed to understand when these conflicts arise and to explore alternative objectives, training procedures, or checkpoint-selection strategies that can address them more effectively. AdaQR’s predictive gains also vary in magnitude across backbones, and its main evaluation excludes Tiny datasets, although results including them are reported in Appendix~\ref{app:extended_results}.

\subsection*{AI use statement}

We used generative AI tools to assist with implementing
experimental code, identifying relevant literature, discussing
research methodology and experimental design, and editing
the manuscript.
The authors reviewed all AI-assisted material and verified
AI-generated code. All final methodological decisions and
scientific conclusions were made by the authors, who take
responsibility for the manuscript, code, and reported results.

\subsection*{Reproducibility statement}

The appendix describes the datasets, preprocessing,
hyperparameter search spaces, and evaluation protocol.
Code and experiment configurations are available in our repository: \url{https://github.com/yandex-research/tabular-regression}.

The repository README provides instructions for downloading
the datasets and reproducing the experiments.

\bibliography{iclr2027_conference}

\begin{thebibliography}{43}
\providecommand{\natexlab}[1]{#1}
\providecommand{\url}[1]{\texttt{#1}}
\expandafter\ifx\csname urlstyle\endcsname\relax
  \providecommand{\doi}[1]{doi: #1}\else
  \providecommand{\doi}{doi: \begingroup \urlstyle{rm}\Url}\fi

\bibitem[Imani and White(2018)]{imani2018improving}
Ehsan Imani and Martha White.
\newblock Improving regression performance with distributional losses.
\newblock In \emph{International conference on machine learning}, pages 2157--2166. PMLR, 2018.

\bibitem[Stewart et~al.(2025)Stewart, Bach, and Berthet]{stewart2025building}
Lawrence Stewart, Francis Bach, and Quentin Berthet.
\newblock Building bridges between regression, clustering, and classification.
\newblock \emph{arXiv preprint arXiv:2502.02996}, 2025.

\bibitem[Farebrother et~al.(2024)Farebrother, Orbay, Vuong, Taiga, Chebotar, Xiao, Irpan, Levine, Castro, Faust, Kumar, and Agarwal]{farebrother2024stopregressingtrainingvalue}
Jesse Farebrother, Jordi Orbay, Quan Vuong, Adrien~Ali Taiga, Yevgen Chebotar, Ted Xiao, Alex Irpan, Sergey Levine, Pablo~Samuel Castro, Aleksandra Faust, Aviral Kumar, and Rishabh Agarwal.
\newblock Stop regressing: Training value functions via classification for scalable deep {RL}.
\newblock In \emph{Forty-first International Conference on Machine Learning}, 2024.
\newblock URL \url{https://openreview.net/forum?id=dVpFKfqF3R}.

\bibitem[Song and Bahri(2025)]{song2025decodingbased}
Xingyou Song and Dara Bahri.
\newblock Decoding-based regression.
\newblock \emph{Transactions on Machine Learning Research}, 2025.
\newblock ISSN 2835-8856.
\newblock URL \url{https://openreview.net/forum?id=avUQ8jguxg}.

\bibitem[Zhu et~al.(2025)Zhu, Yang, and Jerby]{zhu2025gradientalignedregressionpairwise}
Dixian Zhu, Tianbao Yang, and Livnat Jerby.
\newblock Gradient aligned regression via pairwise losses.
\newblock In \emph{Forty-second International Conference on Machine Learning}, 2025.
\newblock URL \url{https://openreview.net/forum?id=1bnH3Q3zL6}.

\bibitem[Zhang et~al.(2023)Zhang, Yang, Mi, Zheng, and Yao]{zhang2023improvingdeepregressionordinal}
Shihao Zhang, Linlin Yang, Michael~Bi Mi, Xiaoxu Zheng, and Angela Yao.
\newblock Improving deep regression with ordinal entropy.
\newblock In \emph{The Eleventh International Conference on Learning Representations}, 2023.
\newblock URL \url{https://openreview.net/forum?id=raU07GpP0P}.

\bibitem[Zha et~al.(2023)Zha, Cao, Son, Yang, and Katabi]{zha2023rank}
Kaiwen Zha, Peng Cao, Jeany Son, Yuzhe Yang, and Dina Katabi.
\newblock Rank-n-contrast: learning continuous representations for regression.
\newblock \emph{Advances in Neural Information Processing Systems}, 36:\penalty0 17882--17903, 2023.

\bibitem[Zhang et~al.(2025)Zhang, Yan, and Yao]{zhang2025improvingdeepregressiontightness}
Shihao Zhang, Yuguang Yan, and Angela Yao.
\newblock Improving deep regression with tightness.
\newblock In \emph{The Thirteenth International Conference on Learning Representations}, 2025.
\newblock URL \url{https://openreview.net/forum?id=dkoiAGjZV9}.

\bibitem[Koenker and Bassett(1978)]{Koenker.Bassett1978}
Roger Koenker and Gilbert Bassett, Jr.
\newblock Regression quantiles.
\newblock \emph{Econometrica}, 46\penalty0 (1):\penalty0 33--50, 1978.
\newblock ISSN 00129682.
\newblock \doi{10.2307/1913643}.
\newblock URL \url{http://www.jstor.org/stable/1913643}.

\bibitem[Qu et~al.(2026)Qu, Holzm{\"u}ller, Varoquaux, and Le~Morvan]{tabiclv2}
Jingang Qu, David Holzm{\"u}ller, Ga{\"e}l Varoquaux, and Marine Le~Morvan.
\newblock Tabiclv2: A better, faster, scalable, and open tabular foundation model.
\newblock In \emph{Forty-third International Conference on Machine Learning}, 2026.

\bibitem[Kartashev et~al.(2025)Kartashev, Rubachev, and Babenko]{kartashev2025unveiling}
Nikolay Kartashev, Ivan Rubachev, and Artem Babenko.
\newblock Unveiling the role of data uncertainty in tabular deep learning.
\newblock \emph{arXiv preprint arXiv:2509.04430}, 2025.

\bibitem[Jordan et~al.(2024)Jordan, Jin, Boza, Jiacheng, Cesista, Newhouse, and Bernstein]{jordan2024muon}
Keller Jordan, Yuchen Jin, Vlado Boza, You Jiacheng, Franz Cesista, Laker Newhouse, and Jeremy Bernstein.
\newblock Muon: An optimizer for hidden layers in neural networks, 2024.
\newblock URL \url{https://kellerjordan.github.io/posts/muon/}.

\bibitem[Gorishniy et~al.(2026{\natexlab{a}})Gorishniy, Rubachev, Feoktistov, and Babenko]{gorishniy2026benchmarking}
Yury Gorishniy, Ivan Rubachev, Dmitrii Feoktistov, and Artem Babenko.
\newblock Benchmarking optimizers for mlps in tabular deep learning.
\newblock \emph{arXiv preprint arXiv:2604.15297}, 2026{\natexlab{a}}.

\bibitem[Huber(1964)]{huber1964robust}
Peter~J Huber.
\newblock Robust estimation of a location parameter.
\newblock \emph{The Annals of Mathematical Statistics}, 35\penalty0 (1):\penalty0 73--101, 1964.

\bibitem[Bishop(1994)]{bishop1994mixture}
Christopher~M Bishop.
\newblock Mixture density networks.
\newblock Technical Report NCRG/94/004, Neural Computing Research Group, Aston University, 1994.

\bibitem[Han et~al.(2022)Han, Zheng, and Zhou]{han2022card}
Xizewen Han, Huangjie Zheng, and Mingyuan Zhou.
\newblock Card: Classification and regression diffusion models.
\newblock \emph{Advances in Neural Information Processing Systems}, 35:\penalty0 18100--18115, 2022.

\bibitem[Heckel and Yilmaz(2020)]{heckel2020early}
Reinhard Heckel and Fatih~Furkan Yilmaz.
\newblock Early stopping in deep networks: Double descent and how to eliminate it.
\newblock \emph{arXiv preprint arXiv:2007.10099}, 2020.

\bibitem[Bao et~al.(2023)Bao, Teng, and Zhang]{bao2023token}
Guangsheng Bao, Zhiyang Teng, and Yue Zhang.
\newblock Token-level fitting issues of seq2seq models.
\newblock In \emph{Proceedings of the 8th Workshop on Representation Learning for NLP (RepL4NLP 2023)}, pages 276--288, 2023.

\bibitem[Berta et~al.(2025)Berta, Holzm{\"u}ller, Jordan, and Bach]{berta2025rethinking}
Eug{\`e}ne Berta, David Holzm{\"u}ller, Michael~I Jordan, and Francis Bach.
\newblock Rethinking early stopping: Refine, then calibrate.
\newblock \emph{arXiv preprint arXiv:2501.19195}, 2025.

\bibitem[Kendall and Gal(2017)]{kendall2017uncertainties}
Alex Kendall and Yarin Gal.
\newblock What uncertainties do we need in bayesian deep learning for computer vision?
\newblock \emph{Advances in neural information processing systems}, 30, 2017.

\bibitem[Immer et~al.(2023)Immer, Palumbo, Marx, and Vogt]{immer2023effective}
Alexander Immer, Emanuele Palumbo, Alexander Marx, and Julia Vogt.
\newblock Effective bayesian heteroscedastic regression with deep neural networks.
\newblock \emph{Advances in Neural Information Processing Systems}, 36:\penalty0 53996--54019, 2023.

\bibitem[Seitzer et~al.(2022)Seitzer, Tavakoli, Antic, and Martius]{seitzer2022on}
Maximilian Seitzer, Arash Tavakoli, Dimitrije Antic, and Georg Martius.
\newblock On the pitfalls of heteroscedastic uncertainty estimation with probabilistic neural networks.
\newblock In \emph{International Conference on Learning Representations}, 2022.
\newblock URL \url{https://openreview.net/forum?id=aPOpXlnV1T}.

\bibitem[Wong-Toi et~al.(2023)Wong-Toi, Boyd, Fortuin, and Mandt]{wong2023understanding}
Eliot Wong-Toi, Alex Boyd, Vincent Fortuin, and Stephan Mandt.
\newblock Understanding pathologies of deep heteroskedastic regression.
\newblock \emph{arXiv preprint arXiv:2306.16717}, 2023.

\bibitem[Stirn et~al.(2023)Stirn, Wessels, Schertzer, Pereira, Sanjana, and Knowles]{stirn2023faithful}
Andrew Stirn, Harm Wessels, Megan Schertzer, Laura Pereira, Neville Sanjana, and David Knowles.
\newblock Faithful heteroscedastic regression with neural networks.
\newblock In \emph{International Conference on Artificial Intelligence and Statistics}, pages 5593--5613. PMLR, 2023.

\bibitem[Tan et~al.(2022)Tan, Wang, and Zhou]{tan2022high}
Kean~Ming Tan, Lan Wang, and Wen-Xin Zhou.
\newblock High-dimensional quantile regression: Convolution smoothing and concave regularization.
\newblock \emph{Journal of the Royal Statistical Society Series B: Statistical Methodology}, 84\penalty0 (1):\penalty0 205--233, 2022.

\bibitem[Fasiolo et~al.(2021)Fasiolo, Wood, Zaffran, Nedellec, and Goude]{fasiolo2021fast}
Matteo Fasiolo, Simon~N Wood, Margaux Zaffran, Rapha{\"e}l Nedellec, and Yannig Goude.
\newblock Fast calibrated additive quantile regression.
\newblock \emph{Journal of the American Statistical Association}, 116\penalty0 (535):\penalty0 1402--1412, 2021.

\bibitem[Jung et~al.(2015)Jung, Lee, and MacEachern]{jung2015efficient}
Yoonsuh Jung, Yoonkyung Lee, and Steven~N MacEachern.
\newblock Efficient quantile regression for heteroscedastic models.
\newblock \emph{Journal of Statistical Computation and Simulation}, 85\penalty0 (13):\penalty0 2548--2568, 2015.

\bibitem[Malekzadeh et~al.(2024)Malekzadeh, Plataniotis, Poulos, and Wang]{malekzadeh2024robust}
Parvin Malekzadeh, Konstantinos~N Plataniotis, Zissis Poulos, and Zeyu Wang.
\newblock A robust quantile huber loss with interpretable parameter adjustment in distributional reinforcement learning.
\newblock In \emph{ICASSP 2024-2024 IEEE International Conference on Acoustics, Speech and Signal Processing (ICASSP)}, pages 6120--6124. IEEE, 2024.

\bibitem[Huber(1992)]{huber1992robust}
Peter~J Huber.
\newblock Robust estimation of a location parameter.
\newblock In \emph{Breakthroughs in statistics: Methodology and distribution}, pages 492--518. Springer, 1992.

\bibitem[Purucker et~al.(2026)Purucker, Tschalzev, Erickson, Blayer, Holzm{\"u}ller, Arazi, Pfefferle, Tajjar, Varoquaux, Hutter, et~al.]{purucker2026beyond}
Lennart Purucker, Andrej Tschalzev, Nick Erickson, Gioia Blayer, David Holzm{\"u}ller, Alan Arazi, Alexander Pfefferle, Mustafa Tajjar, Ga{\"e}l Varoquaux, Frank Hutter, et~al.
\newblock Beyond iid: How general are tabular foundation models, really?
\newblock \emph{arXiv preprint arXiv:2606.30410}, 2026.

\bibitem[Gorishniy et~al.(2025)Gorishniy, Kotelnikov, and Babenko]{gorishniy2025tabm}
Yury Gorishniy, Akim Kotelnikov, and Artem Babenko.
\newblock {TabM}: Advancing tabular deep learning with parameter-efficient ensembling.
\newblock In \emph{The Thirteenth International Conference on Learning Representations}, 2025.
\newblock URL \url{https://openreview.net/forum?id=Sd4wYYOhmY}.

\bibitem[Loshchilov and Hutter(2019)]{loshchilov2018decoupled}
Ilya Loshchilov and Frank Hutter.
\newblock Decoupled weight decay regularization.
\newblock In \emph{International Conference on Learning Representations}, 2019.
\newblock URL \url{https://openreview.net/forum?id=Bkg6RiCqY7}.

\bibitem[Gorishniy et~al.(2026{\natexlab{b}})Gorishniy, Kotelnikov, Rubachev, and Babenko]{gorishniy2026tabpack}
Yury Gorishniy, Akim Kotelnikov, Ivan Rubachev, and Artem Babenko.
\newblock Tabpack: Efficient hyperparameter ensembles for tabular deep learning.
\newblock In \emph{Forty-third International Conference on Machine Learning}, 2026{\natexlab{b}}.
\newblock URL \url{https://openreview.net/forum?id=8qHjrpxrQn}.

\bibitem[Holzm{\"u}ller et~al.(2024)Holzm{\"u}ller, Grinsztajn, and Steinwart]{holzmuller2024realmlp}
David Holzm{\"u}ller, L{\'e}o Grinsztajn, and Ingo Steinwart.
\newblock Better by default: Strong pre-tuned {MLPs} and boosted trees on tabular data.
\newblock \emph{Advances in Neural Information Processing Systems}, 37:\penalty0 26577--26658, 2024.

\bibitem[Prokhorenkova et~al.(2018)Prokhorenkova, Gusev, Vorobev, Dorogush, and Gulin]{prokhorenkova2018catboost}
Liudmila Prokhorenkova, Gleb Gusev, Aleksandr Vorobev, Anna~Veronika Dorogush, and Andrey Gulin.
\newblock Catboost: unbiased boosting with categorical features.
\newblock \emph{Advances in neural information processing systems}, 31, 2018.

\bibitem[Krothapalli and Abbott(2020)]{krothapalli2020adaptive}
Ujwal Krothapalli and A~Lynn Abbott.
\newblock Adaptive label smoothing.
\newblock \emph{arXiv preprint arXiv:2009.06432}, 2020.

\bibitem[Sz{\'e}kely(2003)]{szekely2003statistics}
G{\'a}bor~J Sz{\'e}kely.
\newblock E-statistics: The energy of statistical samples.
\newblock \emph{Bowling Green State University, Department of Mathematics and Statistics Technical Report}, 3\penalty0 (05):\penalty0 1--18, 2003.

\bibitem[Pedregosa et~al.(2018)Pedregosa, Varoquaux, Gramfort, Michel, Thirion, Grisel, Blondel, Müller, Nothman, Louppe, Prettenhofer, Weiss, Dubourg, Vanderplas, Passos, Cournapeau, Brucher, Perrot, and Édouard Duchesnay]{pedregosa2018scikitlearnmachinelearningpython}
Fabian Pedregosa, Gaël Varoquaux, Alexandre Gramfort, Vincent Michel, Bertrand Thirion, Olivier Grisel, Mathieu Blondel, Andreas Müller, Joel Nothman, Gilles Louppe, Peter Prettenhofer, Ron Weiss, Vincent Dubourg, Jake Vanderplas, Alexandre Passos, David Cournapeau, Matthieu Brucher, Matthieu Perrot, and Édouard Duchesnay.
\newblock Scikit-learn: Machine learning in python, 2018.
\newblock URL \url{https://arxiv.org/abs/1201.0490}.

\bibitem[Akiba et~al.(2019)Akiba, Sano, Yanase, Ohta, and Koyama]{akiba2019optuna}
Takuya Akiba, Shotaro Sano, Toshihiko Yanase, Takeru Ohta, and Masanori Koyama.
\newblock Optuna: A next-generation hyperparameter optimization framework.
\newblock In \emph{Proceedings of the 25th ACM SIGKDD international conference on knowledge discovery \& data mining}, pages 2623--2631, 2019.

\bibitem[Gorishniy et~al.(2022)Gorishniy, Rubachev, and Babenko]{gorishniy2022embeddings}
Yury Gorishniy, Ivan Rubachev, and Artem Babenko.
\newblock On embeddings for numerical features in tabular deep learning.
\newblock \emph{Advances in Neural Information Processing Systems}, 35:\penalty0 24991--25004, 2022.

\bibitem[Van Den~Oord et~al.(2017)Van Den~Oord, Vinyals, et~al.]{oord2018neuraldiscreterepresentationlearning}
Aaron Van Den~Oord, Oriol Vinyals, et~al.
\newblock Neural discrete representation learning.
\newblock \emph{Advances in neural information processing systems}, 30, 2017.

\bibitem[Matheson and Winkler(1976)]{matheson1976scoring}
James~E Matheson and Robert~L Winkler.
\newblock Scoring rules for continuous probability distributions.
\newblock \emph{Management science}, 22\penalty0 (10):\penalty0 1087--1096, 1976.

\bibitem[Laio and Tamea(2007)]{laio2007verification}
Francesco Laio and Stefania Tamea.
\newblock Verification tools for probabilistic forecasts of continuous hydrological variables.
\newblock \emph{Hydrology and Earth System Sciences}, 11\penalty0 (4):\penalty0 1267--1277, 2007.

\end{thebibliography}
\bibliographystyle{unsrtnat}

\clearpage
\appendix

\section{Datasets}

\label{app:datasets}

Our benchmark comprises 41 datasets from BeyondArena \citep{purucker2026beyond} . We exclude \texttt{maps\_router\_eta\_1m}, \texttt{delivery\_eta\_1m} and \texttt{cooking\_time\_1m} from the evaluation due to their combination of high numerical dimensionality (992 for \texttt{maps\_router\_eta\_1m}), high categorical cardinality (135131 for \texttt{cooking\_time\_1m}) and large sample size (1.25 million for all three of them). Under our chosen preprocessing and training pipeline, these characteristics make exhaustive hyperparameter tuning infeasible within the 24-hour computational budget imposed for each dataset. We use the lite subset of the benchmark (i.e., the first split of each dataset).

Importantly, GAR[MSE], GAR[MAE] and Ordinal Entropy on MLP  show abnormal behavior and fail to converge on \texttt{airfoil\_self\_noise} and on \texttt{telemonitoring\_parkinsons\_\allowbreak biomedical\_voice\_\allowbreak measurements}. To retain these datasets in the aggregate comparisons, we replace
each failed result with the worst result among the methods
that converged on the same dataset using the MLP backbone.

\paragraph{Batch sizes.}
For non-RealMLP models we determine the batch size from the number of training
examples $n_{\mathrm{train}}$:
\[
B(n_{\mathrm{train}})=
\begin{cases}
32,   & n_{\mathrm{train}} \leq 2{,}800,\\
64,   & 2{,}800 < n_{\mathrm{train}} \leq 4{,}500,\\
128,  & 4{,}500 < n_{\mathrm{train}} \leq 6{,}400,\\
256,  & 6{,}400 < n_{\mathrm{train}} \leq 32{,}000,\\
512,  & 32{,}000 < n_{\mathrm{train}} \leq 108{,}000,\\
1024, & n_{\mathrm{train}} > 108{,}000.
\end{cases}
\]

For RealMLP we use default batch sizes from PyTabkit - 256 for training and 1024 for prediction.

\section{Implementation Details}
\label{app:implementation_details}

\subsection{Experimental protocol}
\label{app:protocol}

\textbf{Data preprocessing.}
For each dataset, the same preprocessing is used. First, we treat binary features as categorical. Both binary and categorical features are then transformed using ordinal encoding. By default, numerical features  are processed via a slightly modified version of quantile normalization from \texttt{Scikit-learn} \citep{pedregosa2018scikitlearnmachinelearningpython} (see the code). 
Finally, targets are standardized to zero mean and unit variance, though all final metrics are computed on the original scale.  The only exception is the evaluation of decoders \citep{song2025decodingbased}, where we follow the authors' protocol by either mapping targets to $[0, 1]$ (for the normalized decoder) or applying no transformations (for the unnormalized version).

\textbf{Training Process}. We use Muon \citep{jordan2024muon} to optimize all layer weight matrices, while AdamW \citep{loshchilov2018decoupled} is used for bias parameters and the output-layer weight matrix. We do not apply learning rate schedulers. We apply global gradient clipping to 1.0. For each dataset, we use the predefined batch size (see Appendix~\ref{app:datasets}) and continue training until  there are $\texttt{patience}$ consecutive epochs without improvements on the validation set; we set $\texttt{patience} = 16 $.

\textbf{Hyperparameter tuning.} For each experiment, we carefully tune optimizer, objective and backbone hyperparameters using Optuna's TPE sampler \citep{akiba2019optuna}, for 100 trials with 24-hour tune time limitation. We use RMSE computed on the validation split as the optimization objective. We provide full parameter spaces in Appendix~\ref{app:tuning}.

\textbf{Evaluation.} After the best hyperparameters are found, we run the model with those parameters on 15 different random seeds and report the performance on the test part of the dataset. For each dataset, we use mean and standard deviation of test metrics (RMSE) over these random seeds to compare methods as described in Appendix~\ref{app:comparison}.

\subsection{Ranks and metrics}
\label{app:comparison}
As we focus exclusively on regression tasks, we report Root Mean Squared Error (RMSE) as a primary metric. As mentioned in Appendix~\ref{app:protocol}, all models and objectives are evaluated on 15 random seeds. We compare methods using two aggregation approaches.

\textbf{Relative Improvement.} For each dataset, we compute the mean RMSE over 15 seeds and calculate its relative improvement (in \%) over the baseline (typically the standard MSE objective). These values are shown in the right panel of Figure~\ref{fig:vertical-revisiging}. Meanwhile,  in the ``Improv. (\%)'' columns of tables, we further aggregate them and report their median over the benchmark datasets. 

\textbf{Ranks.} For each dataset, we assign ranks based on a conservative order relation. Method $A$ (with mean RMSE $A_{\mu}$ and standard deviation $A_{\sigma}$) is considered significantly better than method $B$ if $A_{\mu} + A_{\sigma} < B_{\mu}$ (We use this approach for finding W/T/L with $A$ being the reference method). Intuitively, this approach groups methods into ``tiers,'' ensuring that negligible gains are not counted as wins on individual datasets.

Specifically, ranks are computed independently for each dataset using a top-down procedure. We first sort the methods in ascending order of their mean RMSE. The best model is assigned rank 1 and serves as the initial reference. We then iterate through the remaining models: all methods that are not significantly worse than the current reference (according to the aforementioned rule) also receive rank 1. The first model that fails this criterion is assigned rank 2 and becomes the new reference for subsequent comparisons. This process continues until all models are ranked.

The left panel of Figure~\ref{fig:vertical-revisiging} illustrates the mean and standard deviation of these ranks across the benchmark. 

\subsection{Support construction for distributional losses}
\label{app:support}

For every binning-based distributional objective we construct the support $[a, b]$ as follows: (i) we compute the range between the 0.005- and 0.995-quantiles of the training labels distribution (we employ quantiles rather than min-max values to mitigate the impact of outliers while capturing the effective support of the distribution); (ii) we expand this range by adding the same interval on both the left and right sides; (iii) we then uniformly partition it into bins. 

The uniform binning is the most common in distributional regression research \citep{imani2018improving, stewart2025building} and performs stably in our experiments.

\subsection{Advanced models from Section~\ref{sec:advanced_models}}
\label{app:advanced_models}

Here, we provide implementation details for the experiments in
Section~\ref{sec:advanced_models}. We use TabM-mini with PLE-Q
embeddings \citep{gorishniy2022embeddings}, following the configuration
and implementation of \citet{gorishniy2025tabm}.

We use TabPack equipped with numerical embeddings from \citet{gorishniy2026tabpack}. For all objectives on TabPack we do not add any new sampling parameters: We fix num\_quantiles and num\_bins as 384 (the same as the width) for all probabilistic objectives. Additionally, for AdaQR we fix \texttt{regulizer} as 0.2.

 We use the RealMLP \citep{holzmuller2024realmlp} implementation from PyTabkit\footnote{\url{https://github.com/dholzmueller/pytabkit}} and add HL-Gaussian, quantile regression, AdaQR to that implementation. We use 100 Optuna TPE sampler tuning iterations as for other models and stop tuning at the 24-hour time limit. We use the ``tabarena-new'' hyperparameter search-space.

\subsection{AdaQR implementation}
\label{app:adaqr_implementation}

The implementation of AdaQR includes the following details:

\begin{enumerate}
    \item Alongside the quantile predictions, the model outputs
    a scalar logit $a_\theta(x)$ for the smoothing scale:
    \[
    \sigma_l(x)=|a_\theta(x)+b_0|+\varepsilon,
    \qquad \varepsilon=10^{-5}.
    \]
    Before training, we compute a fixed offset $b_0$ so that
    $a_{\theta_0}(x)+b_0$ has mean $0.5$ over the training set,
    separately for each ensemble member. All scales are expressed
    in standardized target units.

    \item The parameter \texttt{regulizer} in our implementation
    corresponds to $1-\lambda\sqrt{\pi}$, rather than $\lambda$
    itself in the AdaQR objective
    (Section~\ref{sec:ada_quantile}).
    Its tuning space is provided in Appendix~\ref{app:tuning}.

    \item We use equally spaced quantile levels from $0.001$
    to $0.999$. At inference, we average the predicted quantiles,
    as in standard quantile regression; the smoothing-scale
    output is not used.
\end{enumerate}

 \subsection{Hyperparameter Tuning Spaces}
\label{app:tuning}
In this section, we provide tuning spaces for the evaluated objectives and methods. Tables~\ref{tab:mlp_tuning}--\ref{tab:tabpack_tuning} summarize the optimizer and architecture tuning spaces for different models; objective-specific hyperparameter tuning spaces are provided in Table~\ref{tab:loss_tuning}. Parameter names correspond to their implementation in the source code.

In the case of the RealMLP backbone we use the ``tabarena-new'' hyperparameter search-space for RealMLP-specific parameters. 

\begin{table}[h!]
\caption{The hyperparameter tuning space for MLP. Here, (A) is the default space and (B) is used when evaluating Decoder[U] and Decoder[N] (see Section~\ref{sec:results}).}
\label{tab:mlp_tuning}
\centering
\begin{tabular}{ll}
\toprule
Parameter & Distribution \\
\midrule
\# layers & UniformInt[1, 6] \\
Width (hidden size) & UniformInt[64, 1024] \\
Dropout rate & \{0.0, Uniform[0.0, 0.5]\} \\
Learning rate (Muon) & LogUniform[1e-4, 3e-2] \\
Learning rate (AdamW) & (A) LogUniform[3e-5, 1e-3], \; (B) LogUniform[3e-6, 5e-4]\\
Weight decay (AdamW) & LogUniform[5e-3, 5] \\
\bottomrule
\end{tabular}
\end{table}

\begin{table}[h!]
\caption{The hyperparameter tuning space for TabM.}
\label{tab:tabm-mini_ple_tuning}
\centering
\begin{tabular}{ll}
\toprule
Parameter & Distribution \\
\midrule
\# layers & UniformInt[1, 4]\\
Width (hidden size) & UniformInt[64, 1024] \\
Dropout rate & \{0.0, Uniform[0.0, 0.5]\} \\
Learning rate (Muon) & LogUniform[1e-3, 1e-1] \\
Learning rate (AdamW) & LogUniform[5e-5, 3e-3] \\
Weight decay (AdamW) & LogUniform[1e-3, 1] \\
\midrule
d\_embedding & UniformInt[8, 32] \\
n\_bins & UniformInt[2, 128] \\
\bottomrule
\end{tabular}
\end{table}

\begin{table}[h!]
\caption{The sampling space for TabPack}
\label{tab:tabpack_tuning}
\centering
\begin{tabular}{ll}
\toprule
Parameter & Distribution \\
\midrule

\# layers & UniformInt[1, 3] \\
Width (hidden size) & 384 \\
Dropout rate & \{0.0, Uniform[0.0, 0.5]\} \\
Learning rate (Muon) & LogUniform[1e-3, 1e-1] \\
Learning rate (AdamW) & LogUniform[1e-4, 5e-3] \\
Weight decay (AdamW) & LogUniform[1e-3, 1] \\
d\_embedding & UniformInt[8, 32] \\
init\_scale & LogUniform[1e-2, 1e2] \\
\bottomrule
\end{tabular}
\end{table}

\begin{table}[]
\fontsize{8}{10}\selectfont
\caption{Hyperparameter tuning space for different regression objectives. The notation follows Section~\ref{sec:results}; refer to the source code for any naming ambiguities. For TabPack, the numbers of bins and quantiles are fixed at 384,
and AdaQR's \texttt{regulizer} is fixed at 0.2
(Appendix~\ref{app:advanced_models}).}
\label{tab:loss_tuning}
\centering
\begin{tabular}{lll}
\toprule
Objective & Parameter & Distribution \\
\midrule
MSE & --- & --- \\
\midrule
MAE & --- & --- \\
\midrule
Huber & delta & LogUniform[1e-2, 100] \\
\midrule
NLL & --- & --- \\
\midrule
$\beta$-NLL & beta & Uniform[0, 1] \\
\midrule
MDN & mixture\_size & UniformInt[2, 64]\\
\midrule
Hard Binning & n\_bins & UniformInt[32, 1024]  \\
\midrule
{Soft Binning} & n\_bins & UniformInt[32, 1024]  \\
& soft\_sigma & LogUniform[1e-2, 5] \\
\midrule
{HL-Gaussian} & hl\_num\_bins & UniformInt[32, 1024]; \;\, TabPack: 384 \\
& sigma & LogUniform[1e-2, 5] \\
\midrule
{Joint End-To-End} & alpha & \{0.0, LogUniform[1e-5, 0.5]\} \\
& lambda\_auto & LogUniform[1e-4, 1] \\
& lambda\_kl & LogUniform[1e-4, 1] \\
\midrule
GAR[MAE] & alpha & LogUniform[1e-2, 100] \\
\midrule
GAR[MSE] & alpha & LogUniform[1e-2, 100] \\
\midrule
{Decoder[U]} & n\_blocks & UniformInt[1, 6] \\
& d\_block & UniformInt[64, 512] \\
& dropout & \{0.0, Uniform[0.0, 0.5]\} \\
& n\_heads & Categorical[\{1, 2, 4, 8\}] \\
& temperature & LogUniform[0.01, 0.5] \\
& top\_k & UniformInt[1, 9] \\
& base & 10 \\
& num\_exponent\_digits & 1 \\
& num\_mantissa\_digits & 5 \\
\midrule
{Decoder[N]} & n\_blocks & UniformInt[1, 6] \\
& d\_block & UniformInt[64, 512]  \\
& dropout & \{0.0, Uniform[0.0, 0.5]\} \\
& n\_heads & Categorical[\{1, 2, 4, 8\}] \\
& temperature & LogUniform[0.01, 0.5] \\
& top\_k & UniformInt[1, 9] \\
& base & Categorical[\{2, 4, 8, 10\}] \\
& length & UniformInt[4, 16] \\
\midrule
{Softmax-MSE} & min\_value\_relative & Uniform[-0.1, 0.1] \\
& max\_value\_relative & Uniform[0.9, 1.1] \\
& num\_bins & UniformInt[4, 256] \\
\midrule
{Ordinal Entropy} & lambda\_d & LogUniform[1e-5, 0.1] \\
& ordinal\_entropy\_samples & UniformInt[10, 1000] \\
\midrule
{ROT-REG} & lambda\_ot & LogUniform[1e-3, 100] \\
& opt\_transport\_samples & UniformInt[2, 200]  \\
& gamma & LogUniform[0.1, 100] \\
& y\_dim & UniformInt[2, 32] \\
\midrule
{Quantile Loss} & num\_quantiles & UniformInt[32, 1024]\\
\midrule
{AdaQR} & num\_quantiles & UniformInt[32, 1024]; \;\,TabPack: 384\\
& regulizer & Uniform[1e-3, 1.0]; \quad\; TabPack: 0.2\\
\midrule
{$\text{AdaQR}^{\dagger}$} & num\_quantiles & UniformInt[32, 1024]\\
& regulizer & Uniform[1e-3, 1.0]\\
&scale\_anchor\_weight & LogUniform[1e-3, 2] \\ 
\midrule
{AdaQR w. global $\sigma_l$}  & num\_quantiles & UniformInt[32, 1024]\\
 & sigma & LogUniform[1e-4, 8]\\

\bottomrule
\end{tabular}
\end{table}

\section{Detailed description of regression objectives}
\label{app:revisiting}

\textbf{Notation.} We consider supervised regression tasks on tabular data. Let \(D = \{(x_i, y_i)\}_{i = 1}^{n}\) denote the dataset, where $x_i \in \mathbb{R}^d$ is the feature vector and $y_i \in \mathbb{R}$ is the label for the $i$-th datapoint.
We denote the deep learning model by $f_\theta: \mathbb{R}^d \rightarrow \mathbb{R}^{d'}$, where $d'$ is the output dimension, which is not necessarily equal to 1. Let $l(\cdot,\cdot)$ be the loss function. The loss on a given sample is then defined as $l = l(f_{\theta}(x), y)$.

\subsection{Losses}

\label{app:losses}

\subsubsection{Traditional losses}
The first group of objectives in our comparison includes canonical regression losses. Here, any values denoted by a hat $\hat{\cdot}$ correspond to predictions produced by our model.

\begin{itemize}
\item \textbf{MSE} (Mean Squared Error, or $l_2$ loss) is the standard objective for most regression tasks:
    \[l = l_2(\hat{y}, y) =(\hat{y} - y)^2\]
\item \textbf{MAE} (Mean Absolute Error, or $l_1$ loss) provides robustness against outliers by penalizing errors linearly: \[l = l_1(\hat{y}, y) = |\hat{y} - y|\]
\item \textbf{Huber loss} \citep{huber1992robust} is a middle ground between MSE and MAE:
    \[
    l = l_\delta(\hat{y}, y) = 
    \begin{cases}\frac{1}{2} (y - \hat{y})^2, & \text{if } | \hat{y} - y| \le \delta \\
    \delta \cdot |y - \hat{y}| - \frac{1}{2} \delta^2, & \text{if } |\hat{y} - y| > \delta\end{cases}
    \]
\end{itemize}
\subsubsection{Binning-based losses} \label{sec:binning}

Among various regression objectives, the binning-based losses \citep{oord2018neuraldiscreterepresentationlearning, imani2018improving, stewart2025building} stand out as a promising option for regression tasks and are reported to be beneficial across diverse domains \citep{farebrother2024stopregressingtrainingvalue}. The core idea is to discretize scalar targets using a grid of consecutive bins. Formally, consider a sample $(x, y) \in D$. The regression label $y$ is mapped to a probability vector $\mathbf{p} = (p_1, \dots, p_{d'})$ over the $d'$ bins on the support $[a, b]$. Here, the $i$-th bin is the interval $[l_i, r_i]$ with center $c_i$.  Given the model's logits $f_{\theta}(x)$, let $\mathbf{q}= \text{softmax}(f_{\theta}(x))$. The model is trained to approximate the target distribution $\mathbf{p}$ via the KL divergence:
\[
l = l_{\textsc{KL}} (f_\theta(x), y) = D_{\textsc{KL}}(\mathbf{p}\,\|\,\mathbf{q}).
\]
At inference, model predictions are computed by taking the expectation over the bins:
\(
\hat{y}   = \sum_i \mathbf{q}_i \cdot c_i.
\)

The particular form of the target distribution is typically the same for all datapoints and using different forms of the target distributions results in different regression objectives. Below, we describe several binning-based objectives that we include in the comparison.

\textbf{Hard Binning} \citep{oord2018neuraldiscreterepresentationlearning, stewart2025building} maps each $y$ to a one-hot vector $\mathbf{p}$ by assigning it to the nearest bin center.

\textbf{Soft Binning} \citep{stewart2025building} improves Hard Binning by performing soft mapping to the dense probability vector:
    \[\mathbf{p} = \mathrm{softmax}\left( - \frac{|c_1 - y|^2}{2\sigma^2}, \dots, - \frac{|c_{d'} - y|^2}{2\sigma^2} \right), \]
    where $\sigma$ is a global (shared across the dataset) hyperparameter controlling the spread of probability mass over the bins.
    
 \textbf{HL-Gaussian} \citep{imani2018improving} constructs $\mathbf{p}$ by discretizing the truncated Gaussian distribution with mean $y$ and fixed variance $\sigma^2$ over $[a, b]$. Specifically, given the density
 \[
 \rho(t \mid y) = \frac{1}{Z\sqrt{2\pi \sigma^2}} e^{-\frac{(t-y)^2}{2 \sigma^2}}
 \]
the bin probabilities are
 \[\mathbf{p}_i = \frac{1}{Z} \left(\Phi\left(\frac{r_i - y}{\sigma}\right) - \Phi\left(\frac{l_i - y}{\sigma}\right)\right) \]
where $\Phi$ is the cdf-function for $\mathcal{N}(0,1)$
and  $Z$   is the normalization constant. Similarly to Soft Binning, $\sigma$ is a global hyperparameter that effectively determines the smoothness of $\mathbf{p}$.

\textbf{Joint end-to-end encoding} \citep{stewart2025building} is a binning-based objective where the target distribution $\mathbf{p}$ itself is learned. Instead of using a fixed rule (as in the methods above), the mapping $y \mapsto \mathbf{p}$ is parameterized by an auxiliary autoencoder-style model and trained end-to-end along with the main model. This is achieved by adding a reconstruction term (recovering $y$ from $\mathbf{p}$) to the $D_{\textsc{KL}}(\mathbf{p}\,\|\,\mathbf{q})$ loss.
 
\subsubsection{Quantile loss} \label{sec:quantile_regression}

Quantile loss \citep{Koenker.Bassett1978} offers an alternative approach to modeling conditional distributions $p(y\mid x)$ by directly predicting specific quantiles of the target variable. Specifically, for a given input $x$, the model predicts $N$ quantiles $\hat{y}_{\tau_1}, \hat{y}_{\tau_2}, \dots, \hat{y}_{\tau_N}$ corresponding to quantile levels $\tau_1, \tau_2, \dots, \tau_N \in (0, 1)$.

Each quantile prediction $\hat{y}_{\tau_i}$ is trained using the pinball loss function:
\[
\rho_{\tau_i}(y, \hat{y}_{\tau_i}) = 
\begin{cases}
\tau_i (y - \hat{y}_{\tau_i}), & \text{if } y \geq \hat{y}_{\tau_i} \\
(1 - \tau_i) (\hat{y}_{\tau_i} - y), & \text{if } y < \hat{y}_{\tau_i}
\end{cases}
\]
where $\tau_i \in (0, 1)$ represents the target quantile level. The overall loss function is then computed as the average pinball loss across all $N$ quantiles:
\[
l = l_{\textsc{QR}}(f_\theta(x), y) = \frac{1}{N} \sum_{i=1}^{N} \rho_{\tau_i}(y, f_{\theta,i}(x))
\]
where $f_{\theta,i}(x)$ denotes the model's prediction for the $i$-th quantile.

During inference, the final prediction $\hat{y}$ is typically obtained by taking the average of all predicted quantiles:
\(
\hat{y} = \frac{1}{N} \sum_{i=1}^{N} \hat{y}_{\tau_i}.
\)

This approach provides a more direct characterization of the conditional distribution compared to binning methods, while avoiding the discretization of the output space.

\subsubsection{Modeling full conditional distributions in the arbitrary form}

An alternative paradigm to solve a regression problem has been employed in several recent works \citep{han2022card, song2025decodingbased} that model the conditional distributions $p(y|x)$ without any assumptions about their parametric form. In our experimental comparison, we include the most recent approach proposed by \citet{song2025decodingbased} that
relies on the autoregressive decoder. Specifically, the approach uses a small autoregressive decoding head on top of an MLP backbone. Both MLP and decoder are trained jointly for the next-token (where tokens represent digits) prediction with a cross-entropy loss. 

The decoder predicts the sequence of tokens $t_1, \dots, t_k$ that represent the target $y$ in an autoregressive manner.
Formally, let $\phi(x)$ be the representation produced by the MLP encoder. The model estimates the conditional distribution $p(y\mid x)$ via the following factorization:
\[ p(y \mid x) = \prod_{i = 1}^{k} p(t_i \mid t_1, \dots, t_{i - 1}, \phi(x)). \]

In our study, we evaluate two versions of this decoder described by the method's authors. In the \emph{normalized} variant, targets are mapped to the $[0, 1]$ interval using min-max transformation before tokenization. In the \emph{unnormalized} variant, targets are represented as floating-point numbers in a base-$B$ system: $y = \pm B^e \cdot m$, where the exponent $e$ and mantissa $m \in [1, B)$ are predicted.

\subsubsection{Other losses}
In this section we list several additional regression objectives that are included in our evaluation. 
\begin{itemize}
    \item \textbf{GAR} (Gradient aligned regression) \citep{zhu2025gradientalignedregressionpairwise}  combines point-wise and pair-wise terms to better match the gradient and magnitude of the target function.

    \item \textbf{Softmax-MSE} \citep{stewart2025building} uses a softmax binning head followed by computing the expectation and MSE loss:
\[
l = \left(\textstyle\sum_i \mathbf{q}_i \cdot c_i - y\right)^2
\]

    where $c_i$ and $\mathbf{q}$ follow the binning losses notation from Section~\ref{sec:binning}.
    \item \textbf{Ordinal Entropy loss} \citep{zhang2023improvingdeepregressionordinal} is an objective that enables NNs to learn better feature representations $Z$ (specifically, to maximize mutual information \(\mathcal{I}(Z, Y) = \mathcal{H}(Z) - \mathcal{H}(Z \mid Y)\)), by adding two additional terms to the initial regression loss.
    \item \textbf{ROT-REG} \citep{zhang2025improvingdeepregressiontightness} is an optimal transport based regularizer that similarly allows the model to enhance learned representations by tightening them.

    \item \textbf{NLL} assumes $p(y|x) = \mathcal{N}(\mu(x), \sigma^2(x))$ and minimizes the negative log-likelihood:
    \[
    l = \frac{1}{2}\log \hat{\sigma}^2 + \frac{(\hat{\mu} - y)^2}{2\hat{\sigma}^2}.
    \]
    \item \textbf{$\beta$-NLL} \citep{seitzer2022on} extends \textbf{NLL} by reweighting each sample by $\lfloor\hat{\sigma}^{2\beta}\rfloor$ to improve optimization stability, where $\lfloor \cdot \rfloor$ denotes the stop-gradient operation.

    \item \textbf{Mixture Density Networks} \citep{bishop1994mixture} model the conditional distribution $p(y|x)$ as a Gaussian mixture.
\end{itemize}

\section{Detailed description of the dataset used in Section~\ref{sec:analysis}}
\label{app:synth_data}

To evaluate the robustness of our method under strictly controlled noise conditions, we construct both homoskedastic and heteroskedastic synthetic datasets. Below are the details regarding the process of generation.

\textbf{Dataset Size and Splits:}

The total dataset size is $N = 50,000$. We split this into 20,000 training examples, 15,000 validation examples, and 15,000 test examples. Importantly, while the training targets contain injected noise, the validation and test targets are completely noise-free ($y_{\text{clean}}$). This allows us to accurately evaluate how well the models uncover the true underlying conditional expectation.

\textbf{Base Network Architecture:}

We use a standardized Multi-Layer Perceptron (MLP) generator to create non-linear mappings. To ensure stable scales across different initializations, the outputs of this generator are explicitly standardized to have zero mean and unit variance using statistics calculated over an initial batch of $10,000$ Gaussian noise samples. All MLPs use hidden dimensions of 256 and ReLU activations. 

\textbf{Feature Generation ($X$):}
\begin{enumerate}
    \item We sample a 40-dimensional latent from the standard multivariate Gaussian distribution: $z \sim \mathcal{N}(0, I_{40})$.
    \item We map $z$ through a 3-layer MLP generator to produce a 100-dimensional informative feature vector $x_{\text{real}} \in \mathbb{R}^{100}$. We use this step to imitate the complex relationships between features that are often found in tabular data.
    \item To introduce irrelevant features, we sample 16-dimensional standard Gaussian noise $x_{\text{noisy}} \sim \mathcal{N}(0, I_{16})$.
    \item The final feature vector is the concatenation of the informative and noisy features: $x = [x_{\text{real}}, x_{\text{noisy}}] \in \mathbb{R}^{116}$.
\end{enumerate}

\textbf{Target Generation ($Y$):}
The clean target values, $f(x)$, are generated by passing the informative features $x_{\text{real}}$ through a 2-layer MLP $f_\theta$. 
To create the noisy training targets $y_{\text{noisy}}$, we inject Gaussian noise $\epsilon \sim \mathcal{N}(0, 1)$ according to one of two settings:

\begin{itemize}
    \item \textbf{Homoskedastic Dataset:} The variance is constant across the entire input space. 
    \[
    y_{\text{noisy}} = y_{\text{clean}} + \sigma \epsilon, \quad \text{where } \sigma = 4.0
    \]

    \item \textbf{Heteroskedastic Dataset:} The standard deviation $\sigma(x)$ depends non-linearly on the informative features. We pass $x_{\text{real}}$ through another 2-layer MLP $g_\phi$ to obtain a scalar. This scalar is pushed through a Standard Normal Cumulative Distribution Function $\Phi$ to bound it in $(0, 1)$. We then scale, shift, and exponentiate it to create a smooth, variable standard deviation profile:
    \[
    \sigma(x) = \exp \Big( \Phi(g_\phi(x_{\text{real}})) \times 4.5 - 2.0 \ \Big)
    \]
    \[
    y_{\text{noisy}} = y_{\text{clean}} + \sigma(x) \epsilon
    \]
\end{itemize}

To ensure exact reproducibility, the process uses manual random seeds before any generation.

\section{Quantile loss and Energy Distance}
\label{sec:quantile_crps}

There exists a well-known relation between the quantile loss
and the Continuous Ranked Probability Score (CRPS)
\citep{matheson1976scoring, laio2007verification}:
\[
\begin{aligned}
\mathrm{CRPS}(F,y)
&=
\int_{-\infty}^{+\infty}
\bigl(F(z)-\mathbbm{1}\{z\geq y\}\bigr)^2\,dz \\
&=
2\int_0^1 \rho_\tau(y,Q_F(\tau))\,d\tau
\approx
2\cdot l_{\mathrm{QR}}(Q_F(\cdot),y),
\end{aligned}
\]
where $F$ is a CDF, $Q_F$ is its quantile function,
and $\rho_\tau$ is the pinball loss
(Section~\ref{sec:quantile_regression}).
The approximation replaces the integral over quantile levels
with an average over the finite uniform grid used by our models.
Thus, up to this discretization and a constant factor,
minimizing the quantile loss is equivalent to minimizing CRPS.

For real-valued random variables with finite first moments,
the squared Energy Distance is
\citep{szekely2003statistics}
\[
\mathcal{D}^2(F,G)
=
2\mathbb{E}|X-Y|
-
\mathbb{E}|X-X'|
-
\mathbb{E}|Y-Y'|,
\]
where $X,X'\sim F$ and $Y,Y'\sim G$ are mutually independent.
When $G=\delta_y$, the Dirac distribution concentrated at $y$,
this gives
\[
\mathrm{CRPS}(F,y)
=
\mathbb{E}|X-y|
-\frac{1}{2}\mathbb{E}|X-X'|
=
\frac{1}{2}\mathcal{D}^2(F,\delta_y).
\]
This identity provides the Energy Distance interpretation of
quantile loss used in Section~\ref{sec:ada_quantile}:
the predicted distribution is compared with a point mass
at the observed target.
AdaQR replaces this point mass with an input-dependent
Gaussian smoothing distribution.

\section{AdaQR: Full Objective and Proof of Proposition~\ref{prop:adaqr}}
\label{app:adaqr_proof}

\subsection{Full objective in terms of quantiles}
\label{subapp:adaqr_formula}

Denoting the predicted quantiles as $q_i$ at levels $0 \leq \tau_i \leq 1$,
$i = 1,\dots,N$, the AdaQR loss expands as follows:

\begin{align}
&\frac{1}{2}\mathcal{D}^2\bigl(F,\, \mathcal{N}(y, \sigma_l^2)\bigr) 
+ \lambda \sigma_l \notag\\
= \,&\mathop{\mathbb{E}}\limits_{X, X'\sim F}
\left[\mathop{\mathbb{E}}\limits_{Y, Y'\sim\mathcal{N}(y, \sigma_l^2)}
\left[\|X - Y\| - \dfrac{1}{2} \|X - X'\| - \dfrac{1}{2} \| Y - Y'\| 
\right]\right] + \lambda \sigma_l \notag\\
\approx \,&\mathop{\mathbb{E}}\limits_{Y, Y'\sim\mathcal{N}(y, \sigma_l^2)}
\left[\dfrac{1}{N}\sum\limits_{i=1}^N \|q_i-Y\| 
- \dfrac{2}{N} \sum\limits_{i=1}^{N} \tau_i q_i 
+ \dfrac{1}{N} \sum\limits_{i=1}^{N} q_i 
- \dfrac{1}{2}\mathbb{E}\|Y - Y'\|\right] + \lambda \sigma_l \notag\\
= \,&\frac{1}{N}\sum_{i=1}^N \left( (q_i - y) 
\left(2\Phi\!\left(\tfrac{q_i - y}{\sigma_l}\right) - 1\right) 
+ \frac{2\sigma_l}{\sqrt{2\pi}} e^{-\frac{(q_i-y)^2}{2\sigma_l^2}} \right)
- \frac{2}{N}\sum_{i=1}^N \tau_i q_i 
+ \frac{1}{N}\sum_{i=1}^N q_i 
- \frac{\sigma_l}{\sqrt{\pi}} + \lambda\sigma_l,
\label{eq:adaqr_quantiles}
\end{align}
where $\Phi(\cdot)$ is the standard normal cdf.

\subsection{Proof of Proposition~\ref{prop:adaqr}}
\label{subapp:adaqr_proofitself}

\begin{proposition}[\textbf{AdaQR learns proportional uncertainty}]
\label{prop:adaqr}
Under the following assumptions:
\begin{itemize}
    \item[\textbf{(A1)}\label{ass:A1}] the number of quantiles is large 
    enough to ignore discretization effects;
    \item[\textbf{(A2)}\label{ass:A2}] both the predicted distribution $F$ 
    and the target distribution are Gaussian;
    \item[\textbf{(A3)}\label{ass:A3}] the optimizer converges to 
    Bayes-optimal predictions;
    \item[\textbf{(A4)}\label{ass:A4}] the label noise is Gaussian: 
    $y \mid y_{\mathrm{true}} \sim \mathcal{N}(y_{\mathrm{true}},\, 
    \sigma_t^2)$,
    \item[\textbf{(A5)}\label{ass:A5}] $0<\lambda<1/\sqrt{\pi}$ and $\sigma_t>0$
\end{itemize}
the AdaQR objective~(\ref{eq:loss}) has a unique optimum in 
$\sigma_l$, and the optimal smoothing parameter satisfies:
\begin{equation}
    \sigma_l^2 = \frac{(1 - \sqrt{\pi}\lambda)^2}{1 - 
    (1 - \sqrt{\pi}\lambda)^2}\,\sigma_t^2.
    \label{eq:sigma_l}
\end{equation}
In particular, $\sigma_l^2 \propto \sigma_t^2$, so AdaQR correctly 
captures the relative heteroskedasticity of the data.
\end{proposition}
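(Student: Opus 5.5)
The plan is to compute the expected AdaQR objective in closed form under (A1)--(A4), minimize it first over the predicted distribution $F$ and then over $\sigma_l$, and read off \eqref{eq:sigma_l}. By (A1) we replace the quantile average in \eqref{eq:loss} by the exact integral. The objective then equals $\frac12\mathcal{D}^2\bigl(F,\mathcal{N}(y,\sigma_l^2)\bigr)+\lambda\sigma_l$, as in Appendix~\ref{subapp:adaqr_formula}. By (A2) write $F=\mathcal{N}(\mu,s^2)$. Fix an input $x$ with $y\mid y_{\mathrm{true}}\sim\mathcal{N}(y_{\mathrm{true}},\sigma_t^2)$ from (A4), and take the expectation of the loss over $y$. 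Here (A3) is read as ``$\mu,s,\sigma_l$ minimize this expected loss.''

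\textbf{Step 1 (closed form).} Use $\mathbb{E}|Z|=\sqrt{2/\pi}\,\sqrt{v}$ for $Z\sim\mathcal{N}(0,v)$, together with the fact that $\mathbb{E}|Z|$ for $Z\sim\mathcal{N}(d,v)$ is minimized at $d=0$. The smoothed sample $Y=y+\sigma_l\xi$, averaged over $y$, is distributed as $\mathcal{N}(y_{\mathrm{true}},\sigma_t^2+\sigma_l^2)$ and is independent of $X\sim F$. The three terms of the energy distance are then:
\begin{itemize}
\item the cross term, $\mathbb{E}|X-Y|$, is minimized at $\mu=y_{\mathrm{true}}$, where it equals $\sqrt{2/\pi}\sqrt{s^2+\sigma_t^2+\sigma_l^2}$;
\item the self-term of $F$ is $\frac12\mathbb{E}|X-X'|=s/\sqrt\pi$;
\item the self-term of the smoothing Gaussian, computed conditionally on $y$, is $\frac12\mathbb{E}|Y-Y'|=\sigma_l/\sqrt\pi$.
\end{itemize}
Hence
\[
\bar{\mathcal L}(s,\sigma_l)=\sqrt{\tfrac{2}{\pi}}\sqrt{s^2+\sigma_t^2+\sigma_l^2}-\tfrac{s}{\sqrt\pi}-\tfrac{\sigma_l}{\sqrt\pi}+\lambda\sigma_l .
\]

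\textbf{Step 2 (profile out $F$).} For fixed $\sigma_l$, $\bar{\mathcal L}$ is convex in $s$. Setting $\partial_s\bar{\mathcal L}=0$ gives $\sqrt2\,s=\sqrt{s^2+\sigma_t^2+\sigma_l^2}$, so $s^2=\sigma_t^2+\sigma_l^2$. Substituting back yields the profiled objective
\[
g(\sigma_l)=\tfrac{1}{\sqrt\pi}\sqrt{\sigma_t^2+\sigma_l^2}-\bigl(\tfrac{1}{\sqrt\pi}-\lambda\bigr)\sigma_l .
\]

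\textbf{Step 3 (optimize $\sigma_l$, uniqueness).} The function $g$ is strictly convex on $\sigma_l\ge0$, since $\sqrt{\sigma_t^2+\sigma_l^2}$ is strictly convex when $\sigma_t>0$. Its derivative equals $-(1/\sqrt\pi-\lambda)<0$ at $\sigma_l=0$, using (A5), and tends to $\lambda>0$ as $\sigma_l\to\infty$. Hence there is a unique interior minimizer. It satisfies $\sigma_l/\sqrt{\sigma_t^2+\sigma_l^2}=1-\sqrt\pi\lambda=:c\in(0,1)$, and solving $\sigma_l^2=c^2(\sigma_t^2+\sigma_l^2)$ gives exactly \eqref{eq:sigma_l}.

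\textbf{Main obstacle.} The delicate point is interpretive rather than computational: what ``Bayes-optimal'' in (A3) means for $F$. If one instead fixed $s=\sigma_t$ (the true noise level), the same computation would give $\sigma_l^2=2c^2\sigma_t^2/(2-c^2)$. That is still proportional to $\sigma_t^2$, but it is not the stated constant. So I will state explicitly that $F$ is the minimizer of the AdaQR risk itself, and hence jointly optimal with $\sigma_l$. I would also check that the joint problem has no boundary minimum at $s=0$ or $\sigma_l=0$, which the derivative signs above rule out.
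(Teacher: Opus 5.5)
Your proposal is correct and follows essentially the same route as the paper. You use the same half-normal closed form for the expected objective and arrive at the same first-order conditions, $s^2=\sigma_t^2+\sigma_l^2$ and $\sigma_l^2=(1-\sqrt{\pi}\lambda)^2(\sigma_l^2+\sigma_t^2)$. The differences are minor: you derive $\mu=y_{\mathrm{true}}$ and profile out $s$ before a one-dimensional convexity argument in $\sigma_l$, whereas the paper fixes the mean via (A3), solves the joint stationarity system, and gets uniqueness from joint strict convexity of $\sqrt{2(\sigma_q^2+\sigma_l^2+\sigma_t^2)}$ in $(\sigma_q,\sigma_l)$.
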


\begin{proof}
Fix an input $x$ and define the expected objective
\[
\overline{\mathcal{L}}(\sigma_q,\sigma_l)
= \mathbb{E}_{y \mid x}\!\left[\mathcal{L}(F, y, \sigma_l)\right].
\]
By \textbf{(A2)} and \textbf{(A3)}, we model
$F = \mathcal{N}(y_{\mathrm{true}}, \sigma_q^2)$, and by \textbf{(A4)},
for each observed label $y$ the target distribution is
$G_y = \mathcal{N}(y, \sigma_l^2)$ with
$y \sim \mathcal{N}(y_{\mathrm{true}}, \sigma_t^2)$.

The three terms of the Energy Distance simplify via the Half-Normal
expectation formula $\mathbb{E}|Z| = \sqrt{2/\pi}\,\sigma$ for
$Z \sim \mathcal{N}(0, \sigma^2)$. Note that the relevant quantities
are the \emph{centered} differences
$X - y_{\mathrm{true}}$, $Y - y$, and $y - y_{\mathrm{true}}$,
which are independent and zero-mean (by \textbf{(A2)} and
\textbf{(A4)}):
\begin{itemize}
    \item $X - Y = (X - y_{\mathrm{true}}) - (y - y_{\mathrm{true}})
    - (Y - y)$ is $\mathcal{N}(0,\, \sigma_q^2 + \sigma_t^2 +
    \sigma_l^2)$;
    \item $X - X' = (X - y_{\mathrm{true}}) - (X' - y_{\mathrm{true}})$
    is $\mathcal{N}(0,\, 2\sigma_q^2)$, since $X, X'$ are i.i.d.
    samples from $F$;
    \item conditionally on a shared label $y$, the samples
    $Y, Y' \mid y$ are i.i.d.\ from $\mathcal{N}(y, \sigma_l^2)$, so
    $Y - Y' \mid y$ is $\mathcal{N}(0,\, 2\sigma_l^2)$; crucially,
    this conditional law does not depend on $y$, hence it also holds
    marginally after averaging over $y \mid x$.
\end{itemize}

Combining these and taking the expectation over $y \mid x$, the first
term of the expected objective equals:
\[
\mathbb{E}_{y\mid x}\!\left[\tfrac{1}{2}\mathcal{D}^2(F, G_y)\right]
= \frac{1}{\sqrt{\pi}}\Bigl(\sqrt{2(\sigma_q^2 + \sigma_t^2 +
\sigma_l^2)} - \sigma_q - \sigma_l\Bigr),
\]
so, under \textbf{(A1)}--\textbf{(A5)}, the expected objective is:
\[
\overline{\mathcal{L}}(\sigma_q, \sigma_l) =
\frac{1}{\sqrt{\pi}}\Bigl(\sqrt{2(\sigma_q^2 + \sigma_t^2 +
\sigma_l^2)} - \sigma_q - \sigma_l\Bigr) + \lambda\sigma_l.
\]

Setting partial derivatives to zero:
\[
\begin{cases}
\dfrac{\partial \overline{\mathcal{L}}}{\partial \sigma_q} =
\dfrac{1}{\sqrt{\pi}}\!\left(\dfrac{2\sigma_q}{\sqrt{2(\sigma_q^2 +
\sigma_t^2 + \sigma_l^2)}} - 1\right) = 0,\\[8pt]
\dfrac{\partial \overline{\mathcal{L}}}{\partial \sigma_l} =
\dfrac{1}{\sqrt{\pi}}\!\left(\dfrac{2\sigma_l}{\sqrt{2(\sigma_q^2 +
\sigma_t^2 + \sigma_l^2)}} - 1\right) + \lambda = 0.
\end{cases}
\]

The first equation gives $\sigma_q^2 = \sigma_l^2 + \sigma_t^2$.
Substituting into the second:
\[
\frac{4}{\pi}\sigma_l^2 = \left(\frac{1}{\sqrt{\pi}} -
\lambda\right)^2 \cdot 4(\sigma_l^2 + \sigma_t^2)
\implies
\sigma_l^2 = (1 - \sqrt{\pi}\lambda)^2(\sigma_l^2 + \sigma_t^2),
\]
and solving for $\sigma_l^2$ yields~\eqref{eq:sigma_l}.

It remains to verify that this stationary point is the unique minimum.

For $\sigma_t > 0$, the function $\sqrt{2(\sigma_q^2+\sigma_l^2+\sigma_t^2)}$ is strictly convex in $(\sigma_q,\sigma_l)$ because its Hessian is positive definite everywhere.
The remaining terms,
$-\sigma_q/\sqrt{\pi} + (-1/\sqrt{\pi} +\lambda)\sigma_l$, are linear.
Hence $\overline{\mathcal{L}}$ is strictly convex, and under
\textbf{(A5)} the stationary point found above — which is positive —
is its unique global minimum.
\end{proof}

\section{Scale regularization for RealMLP}
\label{app:anchor}
For RealMLP, we augment AdaQR with an auxiliary scale regularizer,
denoting the resulting objective by AdaQR$^{\dagger}$. Below, we
describe its motivation, formulation, and effect on uncertainty
estimation.

\subsection{Uncertainty estimation in RealMLP}
Proposition~\ref{prop:adaqr} characterizes the population optimum
under Gaussian assumptions. In practice, however, there is no
guarantee that training reaches this optimum. In particular,
a model can correctly rank inputs by uncertainty while producing
inaccurate absolute scales.

We compare TabM and RealMLP on our heteroskedastic synthetic
dataset, where the true data uncertainty is known, converting
smoothing scales into noise-scale estimates via~\eqref{eq:sigma_l}.
TabM already recovers the underlying data uncertainty reasonably
well and consistently across ensemble members
(Figure~\ref{fig:anchor_uncertainty}(a)). RealMLP shows larger
scale deviations and greater variation across members
(Figure~\ref{fig:anchor_uncertainty}(b)), suggesting that these
difficulties limit the gains of vanilla AdaQR on this backbone.
This motivates an auxiliary regularizer for the scales inferred
from the quantile spread and the smoothing output.

\subsection{Auxiliary scale regularization}
We augment AdaQR with a regularizer that provides a direct
training signal for the predicted scales. Following the notation
of the proof in Appendix~\ref{app:adaqr_proof}, let
$r=1-\sqrt{\pi}\lambda$, with $0<r<1$. Under the assumptions
of Proposition~\ref{prop:adaqr}, the optimal scales satisfy
\[
\sigma_t
=
\sqrt{1-r^2}\,\sigma_q
=
\frac{\sqrt{1-r^2}}{r}\,\sigma_l.
\]
This motivates two estimates of the noise standard deviation:
\[
s_q=\sqrt{1-r^2}\,\widehat{\sigma}_q,
\qquad
s_\sigma=\frac{\sqrt{1-r^2}}{r}\,\sigma_l,
\]
where, for $N$ predicted quantiles $q_j$ at levels $\tau_j$,
we estimate the quantile-distribution scale as
\[
\widehat{\sigma}_q^2
=
\frac{\frac{1}{N}\sum_{j=1}^{N}(q_j-\bar q)^2}
     {\frac{1}{N}\sum_{j=1}^{N}[\Phi^{-1}(\tau_j)]^2},
\qquad
\bar q=\frac{1}{N}\sum_{j=1}^{N}q_j.
\]
Here, $\Phi$ is the standard normal CDF. The denominator
accounts for the finite, symmetric quantile grid, recovering
$\sigma_q^2$ exactly for Gaussian quantiles.

We supervise both estimates using CRPS evaluated at Gaussian
CDFs
\[
F_{\mu,s}(z)=\Phi\!\left(\frac{z-\mu}{s}\right),
\qquad
\mu=\operatorname{stopgrad}(\bar q),
\]
where $\operatorname{stopgrad}$ preserves its argument's value
but blocks gradients through it. The Gaussian CRPS has
the closed form
\[
\text{CRPS}(F_{\mu,s},y)
=
e\bigl(2\Phi(e/s)-1\bigr)
+2s\phi(e/s)-\frac{s}{\sqrt{\pi}},
\qquad e=y-\mu,
\]
where $\phi$ is the standard normal density.
The resulting objective, denoted by AdaQR$^{\dagger}$, is
\[
\mathcal{L}_{\mathrm{AdaQR}^{\dagger}}
=
\mathcal{L}_{\mathrm{AdaQR}}
+
\frac{\gamma}{2}
\left[
\text{CRPS}(F_{\mu,s_q},y)
+
\text{CRPS}(F_{\mu,s_\sigma},y)
\right],
\]
averaged over the training batch, with $\gamma\geq0$ controlling
the regularization strength.

Detaching $\bar q$ blocks the auxiliary gradient through the
predicted mean, while centering the quantiles makes the spread
estimate invariant to a common shift. The regularizer thus
directly trains the quantile spread and the smoothing scale,
although it can still affect point predictions through the
shared backbone.

Under the assumptions of Proposition~\ref{prop:adaqr},
both inferred scales equal $\sigma_t$ at the population optimum.
With the conditional mean correctly specified, $s=\sigma_t$
also minimizes the expected Gaussian CRPS. The auxiliary terms
therefore share the same idealized scale optimum as AdaQR.

The regularizer requires no architectural changes: we retain
AdaQR's scalar smoothing-scale output. At inference time,
the point prediction remains the average of the predicted
quantiles.

\subsection{Effect on RealMLP}
\label{app:realmlp_effect}
As shown in Figure~\ref{fig:anchor_uncertainty}(b) and (c),
AdaQR$^{\dagger}$ produces noise-scale estimates that more closely
match the true data uncertainty and are more consistent across
ensemble members. The improvement primarily concerns the
absolute scales, as vanilla AdaQR already ranks inputs by
uncertainty reasonably well. These results suggest that the
auxiliary regularizer helps RealMLP recover the scale relationship
predicted by our theoretical analysis.

\begin{figure}
    \centering
    \includegraphics[width=0.9\linewidth]{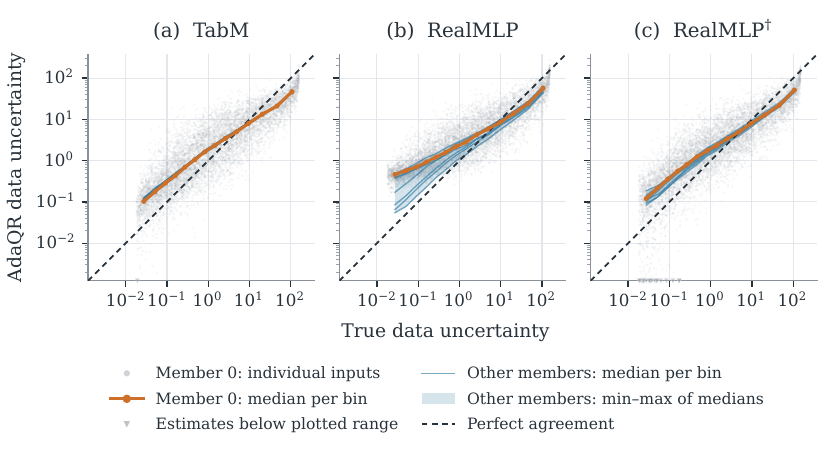}
    \caption{Data uncertainty estimated by AdaQR versus the true data uncertainty
  on a heteroskedastic synthetic dataset, shown on logarithmic axes.
  Both uncertainties are expressed as variances. AdaQR estimates are
  obtained using \eqref{eq:sigma_l}, following the theoretical results
  in Section~\ref{sec:ada_quantile}.
  Panels compare (a) TabM, (b) RealMLP, and (c) RealMLP with
  scale-regularized AdaQR$^{\dagger}$.
  Gray points show individual test inputs for one ensemble
  member; the orange line connects this member's median estimates
  in equal-count bins of true uncertainty.
  Each blue line shows the corresponding bin medians for another
  ensemble member. The shaded band spans the minimum and maximum
  of these medians, rather than a confidence interval.
  Dashed diagonals indicate perfect agreement; downward triangles
  mark estimates below the displayed range.
  All three models capture relative variation in data uncertainty.
  Scale regularization reduces RealMLP's deviations at low uncertainty
  and improves consistency across ensemble members.}
    \label{fig:anchor_uncertainty}
\end{figure}

\section{Uncertainty Estimation}
\label{app:real_uncertainty}
\paragraph{Estimating data uncertainty.} Following \citet{kartashev2025unveiling}, we use a CatBoost model with dual outputs ($\mu(x)$ and $\sigma(x)$) trained via Gaussian negative log-likelihood on the real datasets, utilizing the predicted $\sigma(x)$ as our uncertainty estimate. To prevent data leakage, this estimator is fit exclusively on the training split and applied to validation and test points during inference. For the synthetic data, we use the ground-truth $\sigma(x)$.

  Figure~\ref{fig:real_uncertainty_more} extends the comparison with
  CatBoost to nine additional datasets. We use a single TabM \citep{gorishniy2025tabm}
  checkpoint with 32 branches. For each test input, we
  convert each branch's learned scale to a data variance using
  \eqref{eq:sigma_l} and average the resulting variances.
  Both methods' estimates are expressed in standardized-target
  variance units. The positive correlations provide further
  evidence that AdaQR learns meaningful input-dependent uncertainty.
 \begin{figure}[h]
      \centering
      \includegraphics[width=\linewidth]{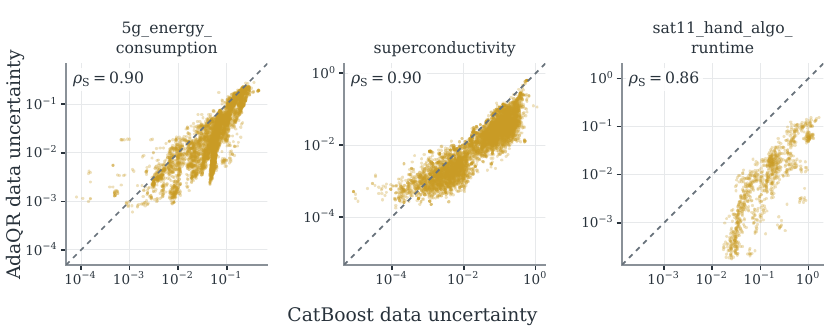}
      \par\vspace{2pt}
      \includegraphics[width=\linewidth]{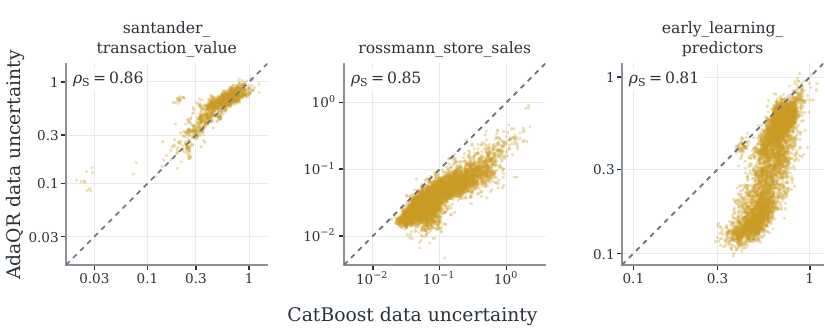}
      \par\vspace{2pt}
      \includegraphics[width=\linewidth]{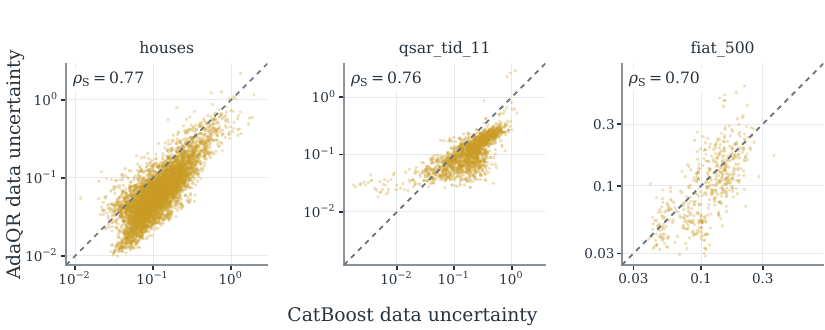}
      \caption{AdaQR versus CatBoost data uncertainty on nine
      additional real-world datasets. Points represent test inputs;
      AdaQR estimates are averaged across TabM branches after
      conversion to variances. Both axes are logarithmic, and dashed
      lines indicate equal estimates ($y=x$).
      $\rho_{\mathrm S}$ denotes Spearman's rank correlation,
      computed over all test inputs; at most 8,000 points are
      displayed per dataset.}
    \label{fig:real_uncertainty_more}
  \end{figure}

\section{Extended Results}
\label{app:extended_results}

In this section we present results from Section~\ref{sec:adaqr_results} with Tiny subset from BeyondArena \citep{purucker2026beyond} included (Table~\ref{tab:adasb-vs-quantile_w_tiny}), full results from cohort based early stopping section \ref{subsec:cohort_es} (Table~\ref{tab:cohort-early-stopping-dataset-wise}) and all per-dataset results used in this paper (Table~\ref{tab:extended-beyond-results}).

\begin{table}[h!]
\centering
\caption{Improv.~(\%) denotes the median relative improvement in RMSE compared to the standard MSE objective. W/T/L summarizes comparisons with MSE using the rule in
Appendix~\ref{app:comparison}.}
\label{tab:adasb-vs-quantile_w_tiny}
\begin{tabular}{llcc}
\toprule
\textbf{Model} & \textbf{Objective} & \textbf{Improv. (\%)} & \textbf{W / T / L} \\
\midrule
{MLP} & Quantile Loss & +0.55 & 21 / 15 / 5 \\
 & AdaQR & +0.74 & 19 / 17 / 5 \\
 & NLL & +0.25 & 10 / 23 / 8 \\
 & beta-NLL & +0.32 & 12 / 21 / 8 \\
\midrule
{TabPack} & Quantile Loss & +0.46 & 18 / 21 / 2 \\
 & AdaQR & +0.45 & 19 / 21 / 1 \\
\midrule
{RealMLP} & Quantile Loss & +0.40 & 20 / 16 / 5 \\
 & AdaQR$^{\dagger}$ & +0.90 & 25 / 9 / 7 \\
\midrule
{TabM} & Quantile Loss & +0.58 & 20 / 16 / 5 \\
 & AdaQR & +0.93 & 20 / 17 / 4 \\
\bottomrule
\end{tabular}
\end{table}

\begin{table*}[t]
\centering
\caption{Cohort-based vs. global checkpoint selection for an MLP
trained with quantile loss or AdaQR on the seven largest
BeyondArena datasets within our 41-dataset evaluation subset.
Values are mean $\pm$ standard deviation of test RMSE over
15 random seeds; lower is better.
Global denotes a single checkpoint shared by all test points;
$K=5$ and $K=10$ denote per-cohort selection with five and ten cohorts.
All checkpoints are selected on validation data.\vspace{5pt}}
\label{tab:cohort-early-stopping-dataset-wise}
\begingroup
\small
\setlength{\tabcolsep}{5pt}
\renewcommand{\arraystretch}{1.08}
\begin{tabular}{@{}llccc@{}}
\toprule
Objective & Cohorts & Global & $K=5$ & $K=10$ \\
\midrule
\multicolumn{5}{c}{\textbf{Electric Motor Temperature}} \\
\addlinespace[2pt]
Quantile & Uncertainty & $2.64 \pm 0.13$ & $2.65 \pm 0.13$ & $2.70 \pm 0.22$ \\
Quantile & Random & $2.64 \pm 0.13$ & $2.64 \pm 0.14$ & $2.64 \pm 0.13$ \\
AdaQR & Uncertainty & $2.74 \pm 0.16$ & $2.72 \pm 0.15$ & $2.73 \pm 0.14$ \\
\midrule
\multicolumn{5}{c}{\textbf{Mercari}} \\
\addlinespace[2pt]
Quantile & Uncertainty & $0.47298 \pm 0.00251$ & $0.47282 \pm 0.00181$ & $0.47296 \pm 0.00188$ \\
Quantile & Random & $0.47298 \pm 0.00251$ & $0.47289 \pm 0.00178$ & $0.47297 \pm 0.00182$ \\
\bottomrule
\end{tabular}
\endgroup
\end{table*}

\begin{table*}[p]
\centering
\caption{Dataset-wise MLP results with AdamW on 41 Beyond datasets (split 0). Test RMSE is mean $\pm$ sample standard deviation ($\mathrm{ddof}=1$). Single uses 15 seeds; E5 uses three disjoint ensembles of five seeds.}
\label{tab:extended-beyond-adamw-1}
\begin{minipage}[t]{0.49\textwidth}
\centering\tiny\setlength{\tabcolsep}{0.5pt}
\renewcommand{\arraystretch}{1.0}
\begin{tabular}{@{}p{0.24\textwidth}@{}p{0.38\textwidth}@{}p{0.38\textwidth}@{}}
\multicolumn{3}{c}{5g\_energy\_consumption} \\
\toprule
Objective & \makebox[0.38\textwidth][c]{Single} & \makebox[0.38\textwidth][c]{E5} \\
\midrule
MSE & \makebox[0.38\textwidth][c]{$4.10 \pm 0.02$} & \makebox[0.38\textwidth][c]{$4.054 \pm 0.008$} \\

HL-Gaussian & \makebox[0.38\textwidth][c]{$4.07 \pm 0.04$} & \makebox[0.38\textwidth][c]{$4.04 \pm 0.01$} \\
Quantile Loss & \makebox[0.38\textwidth][c]{$4.10 \pm 0.03$} & \makebox[0.38\textwidth][c]{$4.06 \pm 0.01$} \\
\bottomrule
\end{tabular}
\end{minipage}
\hfill
\begin{minipage}[t]{0.49\textwidth}
\centering\tiny\setlength{\tabcolsep}{0.5pt}
\renewcommand{\arraystretch}{1.0}
\begin{tabular}{@{}p{0.24\textwidth}@{}p{0.38\textwidth}@{}p{0.38\textwidth}@{}}
\multicolumn{3}{c}{airfoil\_self\_noise} \\
\toprule
Objective & \makebox[0.38\textwidth][c]{Single} & \makebox[0.38\textwidth][c]{E5} \\
\midrule
MSE & \makebox[0.38\textwidth][c]{$1.85 \pm 0.06$} & \makebox[0.38\textwidth][c]{$1.78 \pm 0.03$} \\
HL-Gaussian & \makebox[0.38\textwidth][c]{$1.88 \pm 0.04$} & \makebox[0.38\textwidth][c]{$1.84 \pm 0.02$} \\
Quantile Loss & \makebox[0.38\textwidth][c]{$1.91 \pm 0.03$} & \makebox[0.38\textwidth][c]{$1.87 \pm 0.02$} \\
\bottomrule
\end{tabular}
\end{minipage}
\par\vspace{3pt}
\begin{minipage}[t]{0.49\textwidth}
\centering\tiny\setlength{\tabcolsep}{0.5pt}
\renewcommand{\arraystretch}{1.0}
\begin{tabular}{@{}p{0.24\textwidth}@{}p{0.38\textwidth}@{}p{0.38\textwidth}@{}}
\multicolumn{3}{c}{allstate\_claims\_severity} \\
\toprule
Objective & \makebox[0.38\textwidth][c]{Single} & \makebox[0.38\textwidth][c]{E5} \\
\midrule
MSE & \makebox[0.38\textwidth][c]{$0.5408 \pm 0.0002$} & \makebox[0.38\textwidth][c]{$0.5401 \pm 0.0001$} \\
HL-Gaussian & \makebox[0.38\textwidth][c]{$0.5398 \pm 0.0001$} & \makebox[0.38\textwidth][c]{$0.5392 \pm 0.0000$} \\
Quantile Loss & \makebox[0.38\textwidth][c]{$0.5401 \pm 0.0002$} & \makebox[0.38\textwidth][c]{$0.5396 \pm 0.0001$} \\
\bottomrule
\end{tabular}
\end{minipage}
\hfill
\begin{minipage}[t]{0.49\textwidth}
\centering\tiny\setlength{\tabcolsep}{0.5pt}
\renewcommand{\arraystretch}{1.0}
\begin{tabular}{@{}p{0.24\textwidth}@{}p{0.38\textwidth}@{}p{0.38\textwidth}@{}}
\multicolumn{3}{c}{blood\_tests\_drink\_prediction} \\
\toprule
Objective & \makebox[0.38\textwidth][c]{Single} & \makebox[0.38\textwidth][c]{E5} \\
\midrule
MSE & \makebox[0.38\textwidth][c]{$0.68 \pm 0.01$} & \makebox[0.38\textwidth][c]{$0.680 \pm 0.010$} \\
HL-Gaussian & \makebox[0.38\textwidth][c]{$0.71 \pm 0.03$} & \makebox[0.38\textwidth][c]{$0.70 \pm 0.02$} \\
Quantile Loss & \makebox[0.38\textwidth][c]{$0.674 \pm 0.007$} & \makebox[0.38\textwidth][c]{$0.673 \pm 0.004$} \\
\bottomrule
\end{tabular}
\end{minipage}
\par\vspace{3pt}
\begin{minipage}[t]{0.49\textwidth}
\centering\tiny\setlength{\tabcolsep}{0.5pt}
\renewcommand{\arraystretch}{1.0}
\begin{tabular}{@{}p{0.24\textwidth}@{}p{0.38\textwidth}@{}p{0.38\textwidth}@{}}
\multicolumn{3}{c}{body\_density\_prediction} \\
\toprule
Objective & \makebox[0.38\textwidth][c]{Single} & \makebox[0.38\textwidth][c]{E5} \\
\midrule
MSE & \makebox[0.38\textwidth][c]{$0.0127 \pm 0.0005$} & \makebox[0.38\textwidth][c]{$0.0125 \pm 0.0003$} \\
HL-Gaussian & \makebox[0.38\textwidth][c]{$0.0130 \pm 0.0003$} & \makebox[0.38\textwidth][c]{$0.0129 \pm 0.0001$} \\
Quantile Loss & \makebox[0.38\textwidth][c]{$0.0128 \pm 0.0003$} & \makebox[0.38\textwidth][c]{$0.0127 \pm 0.0000$} \\
\bottomrule
\end{tabular}
\end{minipage}
\hfill
\begin{minipage}[t]{0.49\textwidth}
\centering\tiny\setlength{\tabcolsep}{0.5pt}
\renewcommand{\arraystretch}{1.0}
\begin{tabular}{@{}p{0.24\textwidth}@{}p{0.38\textwidth}@{}p{0.38\textwidth}@{}}
\multicolumn{3}{c}{california\_house\_prices\_2020} \\
\toprule
Objective & \makebox[0.38\textwidth][c]{Single} & \makebox[0.38\textwidth][c]{E5} \\
\midrule
MSE & \makebox[0.38\textwidth][c]{$0.280 \pm 0.006$} & \makebox[0.38\textwidth][c]{$0.271 \pm 0.001$} \\
HL-Gaussian & \makebox[0.38\textwidth][c]{$0.271 \pm 0.003$} & \makebox[0.38\textwidth][c]{$0.2643 \pm 0.0010$} \\
Quantile Loss & \makebox[0.38\textwidth][c]{$0.259 \pm 0.005$} & \makebox[0.38\textwidth][c]{$0.254 \pm 0.001$} \\
\bottomrule
\end{tabular}
\end{minipage}
\par\vspace{3pt}
\begin{minipage}[t]{0.49\textwidth}
\centering\tiny\setlength{\tabcolsep}{0.5pt}
\renewcommand{\arraystretch}{1.0}
\begin{tabular}{@{}p{0.24\textwidth}@{}p{0.38\textwidth}@{}p{0.38\textwidth}@{}}
\multicolumn{3}{c}{cirrhosis\_patient\_survival\_prediction} \\
\toprule
Objective & \makebox[0.38\textwidth][c]{Single} & \makebox[0.38\textwidth][c]{E5} \\
\midrule
MSE & \makebox[0.38\textwidth][c]{$0.71 \pm 0.02$} & \makebox[0.38\textwidth][c]{$0.697 \pm 0.009$} \\
HL-Gaussian & \makebox[0.38\textwidth][c]{$0.68 \pm 0.01$} & \makebox[0.38\textwidth][c]{$0.677 \pm 0.006$} \\
Quantile Loss & \makebox[0.38\textwidth][c]{$0.72 \pm 0.02$} & \makebox[0.38\textwidth][c]{$0.707 \pm 0.005$} \\
\bottomrule
\end{tabular}
\end{minipage}
\hfill
\begin{minipage}[t]{0.49\textwidth}
\centering\tiny\setlength{\tabcolsep}{0.5pt}
\renewcommand{\arraystretch}{1.0}
\begin{tabular}{@{}p{0.24\textwidth}@{}p{0.38\textwidth}@{}p{0.38\textwidth}@{}}
\multicolumn{3}{c}{climate\_model\_weather\_forecasting\_1m} \\
\toprule
Objective & \makebox[0.38\textwidth][c]{Single} & \makebox[0.38\textwidth][c]{E5} \\
\midrule
MSE & \makebox[0.38\textwidth][c]{$1.49 \pm 0.01$} & \makebox[0.38\textwidth][c]{$1.460 \pm 0.004$} \\
HL-Gaussian & \makebox[0.38\textwidth][c]{$1.48 \pm 0.02$} & \makebox[0.38\textwidth][c]{$1.470 \pm 0.008$} \\
Quantile Loss & \makebox[0.38\textwidth][c]{$1.47 \pm 0.01$} & \makebox[0.38\textwidth][c]{$1.453 \pm 0.008$} \\
\bottomrule
\end{tabular}
\end{minipage}
\par\vspace{3pt}
\begin{minipage}[t]{0.49\textwidth}
\centering\tiny\setlength{\tabcolsep}{0.5pt}
\renewcommand{\arraystretch}{1.0}
\begin{tabular}{@{}p{0.24\textwidth}@{}p{0.38\textwidth}@{}p{0.38\textwidth}@{}}
\multicolumn{3}{c}{coffee\_rating\_prediction} \\
\toprule
Objective & \makebox[0.38\textwidth][c]{Single} & \makebox[0.38\textwidth][c]{E5} \\
\midrule
MSE & \makebox[0.38\textwidth][c]{$1.52 \pm 0.05$} & \makebox[0.38\textwidth][c]{$1.50 \pm 0.03$} \\
HL-Gaussian & \makebox[0.38\textwidth][c]{$1.54 \pm 0.05$} & \makebox[0.38\textwidth][c]{$1.522 \pm 0.007$} \\
Quantile Loss & \makebox[0.38\textwidth][c]{$1.50 \pm 0.05$} & \makebox[0.38\textwidth][c]{$1.48 \pm 0.02$} \\
\bottomrule
\end{tabular}
\end{minipage}
\hfill
\begin{minipage}[t]{0.49\textwidth}
\centering\tiny\setlength{\tabcolsep}{0.5pt}
\renewcommand{\arraystretch}{1.0}
\begin{tabular}{@{}p{0.24\textwidth}@{}p{0.38\textwidth}@{}p{0.38\textwidth}@{}}
\multicolumn{3}{c}{concrete\_compressive\_strength} \\
\toprule
Objective & \makebox[0.38\textwidth][c]{Single} & \makebox[0.38\textwidth][c]{E5} \\
\midrule
MSE & \makebox[0.38\textwidth][c]{$4.9 \pm 0.1$} & \makebox[0.38\textwidth][c]{$4.75 \pm 0.04$} \\
HL-Gaussian & \makebox[0.38\textwidth][c]{$4.9 \pm 0.2$} & \makebox[0.38\textwidth][c]{$4.78 \pm 0.03$} \\
Quantile Loss & \makebox[0.38\textwidth][c]{$4.9 \pm 0.2$} & \makebox[0.38\textwidth][c]{$4.67 \pm 0.09$} \\
\bottomrule
\end{tabular}
\end{minipage}
\par\vspace{3pt}
\begin{minipage}[t]{0.49\textwidth}
\centering\tiny\setlength{\tabcolsep}{0.5pt}
\renewcommand{\arraystretch}{1.0}
\begin{tabular}{@{}p{0.24\textwidth}@{}p{0.38\textwidth}@{}p{0.38\textwidth}@{}}
\multicolumn{3}{c}{diamonds} \\
\toprule
Objective & \makebox[0.38\textwidth][c]{Single} & \makebox[0.38\textwidth][c]{E5} \\
\midrule
MSE & \makebox[0.38\textwidth][c]{$0.093 \pm 0.002$} & \makebox[0.38\textwidth][c]{$0.0894 \pm 0.0005$} \\
HL-Gaussian & \makebox[0.38\textwidth][c]{$0.0877 \pm 0.0005$} & \makebox[0.38\textwidth][c]{$0.0861 \pm 0.0001$} \\
Quantile Loss & \makebox[0.38\textwidth][c]{$0.093 \pm 0.002$} & \makebox[0.38\textwidth][c]{$0.089 \pm 0.001$} \\
\bottomrule
\end{tabular}
\end{minipage}
\hfill
\begin{minipage}[t]{0.49\textwidth}
\centering\tiny\setlength{\tabcolsep}{0.5pt}
\renewcommand{\arraystretch}{1.0}
\begin{tabular}{@{}p{0.24\textwidth}@{}p{0.38\textwidth}@{}p{0.38\textwidth}@{}}
\multicolumn{3}{c}{early\_learning\_predictors} \\
\toprule
Objective & \makebox[0.38\textwidth][c]{Single} & \makebox[0.38\textwidth][c]{E5} \\
\midrule
MSE & \makebox[0.38\textwidth][c]{$13.44 \pm 0.04$} & \makebox[0.38\textwidth][c]{$13.375 \pm 0.008$} \\
HL-Gaussian & \makebox[0.38\textwidth][c]{$13.36 \pm 0.03$} & \makebox[0.38\textwidth][c]{$13.28 \pm 0.01$} \\
Quantile Loss & \makebox[0.38\textwidth][c]{$13.40 \pm 0.07$} & \makebox[0.38\textwidth][c]{$13.313 \pm 0.009$} \\
\bottomrule
\end{tabular}
\end{minipage}
\par\vspace{3pt}
\begin{minipage}[t]{0.49\textwidth}
\centering\tiny\setlength{\tabcolsep}{0.5pt}
\renewcommand{\arraystretch}{1.0}
\begin{tabular}{@{}p{0.24\textwidth}@{}p{0.38\textwidth}@{}p{0.38\textwidth}@{}}
\multicolumn{3}{c}{electric\_motor\_temperature\_prediction} \\
\toprule
Objective & \makebox[0.38\textwidth][c]{Single} & \makebox[0.38\textwidth][c]{E5} \\
\midrule
MSE & \makebox[0.38\textwidth][c]{$3.0 \pm 0.1$} & \makebox[0.38\textwidth][c]{$2.88 \pm 0.07$} \\
HL-Gaussian & \makebox[0.38\textwidth][c]{$2.78 \pm 0.07$} & \makebox[0.38\textwidth][c]{$2.70 \pm 0.01$} \\
Quantile Loss & \makebox[0.38\textwidth][c]{$3.1 \pm 0.2$} & \makebox[0.38\textwidth][c]{$2.86 \pm 0.09$} \\
\bottomrule
\end{tabular}
\end{minipage}
\hfill
\begin{minipage}[t]{0.49\textwidth}
\centering\tiny\setlength{\tabcolsep}{0.5pt}
\renewcommand{\arraystretch}{1.0}
\begin{tabular}{@{}p{0.24\textwidth}@{}p{0.38\textwidth}@{}p{0.38\textwidth}@{}}
\multicolumn{3}{c}{fiat\_500} \\
\toprule
Objective & \makebox[0.38\textwidth][c]{Single} & \makebox[0.38\textwidth][c]{E5} \\
\midrule
MSE & \makebox[0.38\textwidth][c]{$891 \pm 18$} & \makebox[0.38\textwidth][c]{$885 \pm 8$} \\
HL-Gaussian & \makebox[0.38\textwidth][c]{$822 \pm 8$} & \makebox[0.38\textwidth][c]{$816.3 \pm 0.8$} \\
Quantile Loss & \makebox[0.38\textwidth][c]{$878 \pm 16$} & \makebox[0.38\textwidth][c]{$871 \pm 3$} \\
\bottomrule
\end{tabular}
\end{minipage}
\par\vspace{3pt}
\begin{minipage}[t]{0.49\textwidth}
\centering\tiny\setlength{\tabcolsep}{0.5pt}
\renewcommand{\arraystretch}{1.0}
\begin{tabular}{@{}p{0.24\textwidth}@{}p{0.38\textwidth}@{}p{0.38\textwidth}@{}}
\multicolumn{3}{c}{food\_delivery\_time} \\
\toprule
Objective & \makebox[0.38\textwidth][c]{Single} & \makebox[0.38\textwidth][c]{E5} \\
\midrule
MSE & \makebox[0.38\textwidth][c]{$8.07 \pm 0.01$} & \makebox[0.38\textwidth][c]{$8.036 \pm 0.005$} \\
HL-Gaussian & \makebox[0.38\textwidth][c]{$8.12 \pm 0.01$} & \makebox[0.38\textwidth][c]{$8.100 \pm 0.005$} \\
Quantile Loss & \makebox[0.38\textwidth][c]{$8.10 \pm 0.01$} & \makebox[0.38\textwidth][c]{$8.079 \pm 0.005$} \\
\bottomrule
\end{tabular}
\end{minipage}
\hfill
\begin{minipage}[t]{0.49\textwidth}
\centering\tiny\setlength{\tabcolsep}{0.5pt}
\renewcommand{\arraystretch}{1.0}
\begin{tabular}{@{}p{0.24\textwidth}@{}p{0.38\textwidth}@{}p{0.38\textwidth}@{}}
\multicolumn{3}{c}{forest\_fires} \\
\toprule
Objective & \makebox[0.38\textwidth][c]{Single} & \makebox[0.38\textwidth][c]{E5} \\
\midrule
MSE & \makebox[0.38\textwidth][c]{$1.37 \pm 0.01$} & \makebox[0.38\textwidth][c]{$1.361 \pm 0.002$} \\
HL-Gaussian & \makebox[0.38\textwidth][c]{$1.356 \pm 0.009$} & \makebox[0.38\textwidth][c]{$1.3546 \pm 0.0001$} \\
Quantile Loss & \makebox[0.38\textwidth][c]{$1.351 \pm 0.003$} & \makebox[0.38\textwidth][c]{$1.3496 \pm 0.0006$} \\
\bottomrule
\end{tabular}
\end{minipage}
\par\vspace{3pt}
\begin{minipage}[t]{0.49\textwidth}
\centering\tiny\setlength{\tabcolsep}{0.5pt}
\renewcommand{\arraystretch}{1.0}
\begin{tabular}{@{}p{0.24\textwidth}@{}p{0.38\textwidth}@{}p{0.38\textwidth}@{}}
\multicolumn{3}{c}{garments\_worker\_productivity} \\
\toprule
Objective & \makebox[0.38\textwidth][c]{Single} & \makebox[0.38\textwidth][c]{E5} \\
\midrule
MSE & \makebox[0.38\textwidth][c]{$0.101 \pm 0.009$} & \makebox[0.38\textwidth][c]{$0.096 \pm 0.002$} \\
HL-Gaussian & \makebox[0.38\textwidth][c]{$0.095 \pm 0.003$} & \makebox[0.38\textwidth][c]{$0.093 \pm 0.001$} \\
Quantile Loss & \makebox[0.38\textwidth][c]{$0.096 \pm 0.007$} & \makebox[0.38\textwidth][c]{$0.090 \pm 0.001$} \\
\bottomrule
\end{tabular}
\end{minipage}
\hfill
\begin{minipage}[t]{0.49\textwidth}
\centering\tiny\setlength{\tabcolsep}{0.5pt}
\renewcommand{\arraystretch}{1.0}
\begin{tabular}{@{}p{0.24\textwidth}@{}p{0.38\textwidth}@{}p{0.38\textwidth}@{}}
\multicolumn{3}{c}{healthcare\_insurance\_expenses} \\
\toprule
Objective & \makebox[0.38\textwidth][c]{Single} & \makebox[0.38\textwidth][c]{E5} \\
\midrule
MSE & \makebox[0.38\textwidth][c]{$0.401 \pm 0.002$} & \makebox[0.38\textwidth][c]{$0.399 \pm 0.001$} \\
HL-Gaussian & \makebox[0.38\textwidth][c]{$0.398 \pm 0.004$} & \makebox[0.38\textwidth][c]{$0.395 \pm 0.002$} \\
Quantile Loss & \makebox[0.38\textwidth][c]{$0.389 \pm 0.002$} & \makebox[0.38\textwidth][c]{$0.3863 \pm 0.0009$} \\
\bottomrule
\end{tabular}
\end{minipage}
\par\vspace{3pt}
\begin{minipage}[t]{0.49\textwidth}
\centering\tiny\setlength{\tabcolsep}{0.5pt}
\renewcommand{\arraystretch}{1.0}
\begin{tabular}{@{}p{0.24\textwidth}@{}p{0.38\textwidth}@{}p{0.38\textwidth}@{}}
\multicolumn{3}{c}{houses} \\
\toprule
Objective & \makebox[0.38\textwidth][c]{Single} & \makebox[0.38\textwidth][c]{E5} \\
\midrule
MSE & \makebox[0.38\textwidth][c]{$0.229 \pm 0.002$} & \makebox[0.38\textwidth][c]{$0.2252 \pm 0.0007$} \\
HL-Gaussian & \makebox[0.38\textwidth][c]{$0.227 \pm 0.002$} & \makebox[0.38\textwidth][c]{$0.224 \pm 0.001$} \\
Quantile Loss & \makebox[0.38\textwidth][c]{$0.228 \pm 0.002$} & \makebox[0.38\textwidth][c]{$0.2245 \pm 0.0007$} \\
\bottomrule
\end{tabular}
\end{minipage}
\hfill
\begin{minipage}[t]{0.49\textwidth}
\centering\tiny\setlength{\tabcolsep}{0.5pt}
\renewcommand{\arraystretch}{1.0}
\begin{tabular}{@{}p{0.24\textwidth}@{}p{0.38\textwidth}@{}p{0.38\textwidth}@{}}
\multicolumn{3}{c}{immoscout\_german\_house\_prices} \\
\toprule
Objective & \makebox[0.38\textwidth][c]{Single} & \makebox[0.38\textwidth][c]{E5} \\
\midrule
MSE & \makebox[0.38\textwidth][c]{$0.406 \pm 0.003$} & \makebox[0.38\textwidth][c]{$0.3977 \pm 0.0008$} \\
HL-Gaussian & \makebox[0.38\textwidth][c]{$0.407 \pm 0.002$} & \makebox[0.38\textwidth][c]{$0.401 \pm 0.001$} \\
Quantile Loss & \makebox[0.38\textwidth][c]{$0.406 \pm 0.003$} & \makebox[0.38\textwidth][c]{$0.397 \pm 0.001$} \\
\bottomrule
\end{tabular}
\end{minipage}
\par\vspace{3pt}
\begin{minipage}[t]{0.49\textwidth}
\centering\tiny\setlength{\tabcolsep}{0.5pt}
\renewcommand{\arraystretch}{1.0}
\begin{tabular}{@{}p{0.24\textwidth}@{}p{0.38\textwidth}@{}p{0.38\textwidth}@{}}
\multicolumn{3}{c}{mercari\_price\_suggestion\_1m} \\
\toprule
Objective & \makebox[0.38\textwidth][c]{Single} & \makebox[0.38\textwidth][c]{E5} \\
\midrule
MSE & \makebox[0.38\textwidth][c]{$0.4756 \pm 0.0005$} & \makebox[0.38\textwidth][c]{$0.4635 \pm 0.0008$} \\
HL-Gaussian & \makebox[0.38\textwidth][c]{$0.4731 \pm 0.0005$} & \makebox[0.38\textwidth][c]{$0.4619 \pm 0.0000$} \\
Quantile Loss & \makebox[0.38\textwidth][c]{$0.4752 \pm 0.0007$} & \makebox[0.38\textwidth][c]{$0.4625 \pm 0.0003$} \\
\bottomrule
\end{tabular}
\end{minipage}
\hfill\mbox{}
\end{table*}

\begin{table*}[p]
\centering
\caption{Dataset-wise MLP results with AdamW (continued). Test RMSE: mean $\pm$ sample standard deviation. Single: 15 seeds; E5: three ensembles of five.}
\label{tab:extended-beyond-adamw-2}
\begin{minipage}[t]{0.49\textwidth}
\centering\tiny\setlength{\tabcolsep}{0.5pt}
\renewcommand{\arraystretch}{1.0}
\begin{tabular}{@{}p{0.24\textwidth}@{}p{0.38\textwidth}@{}p{0.38\textwidth}@{}}
\multicolumn{3}{c}{mercedes\_benz\_greener\_manufacturing} \\
\toprule
Objective & \makebox[0.38\textwidth][c]{Single} & \makebox[0.38\textwidth][c]{E5} \\
\midrule
MSE & \makebox[0.38\textwidth][c]{$8.45 \pm 0.07$} & \makebox[0.38\textwidth][c]{$8.42 \pm 0.02$} \\
HL-Gaussian & \makebox[0.38\textwidth][c]{$8.39 \pm 0.07$} & \makebox[0.38\textwidth][c]{$8.34 \pm 0.03$} \\
Quantile Loss & \makebox[0.38\textwidth][c]{$8.34 \pm 0.03$} & \makebox[0.38\textwidth][c]{$8.325 \pm 0.009$} \\
\bottomrule
\end{tabular}
\end{minipage}
\hfill
\begin{minipage}[t]{0.49\textwidth}
\centering\tiny\setlength{\tabcolsep}{0.5pt}
\renewcommand{\arraystretch}{1.0}
\begin{tabular}{@{}p{0.24\textwidth}@{}p{0.38\textwidth}@{}p{0.38\textwidth}@{}}
\multicolumn{3}{c}{miami\_housing} \\
\toprule
Objective & \makebox[0.38\textwidth][c]{Single} & \makebox[0.38\textwidth][c]{E5} \\
\midrule
MSE & \makebox[0.38\textwidth][c]{$0.1498 \pm 0.0009$} & \makebox[0.38\textwidth][c]{$0.1482 \pm 0.0004$} \\
HL-Gaussian & \makebox[0.38\textwidth][c]{$0.149 \pm 0.001$} & \makebox[0.38\textwidth][c]{$0.1465 \pm 0.0008$} \\
Quantile Loss & \makebox[0.38\textwidth][c]{$0.1488 \pm 0.0008$} & \makebox[0.38\textwidth][c]{$0.1472 \pm 0.0003$} \\
\bottomrule
\end{tabular}
\end{minipage}
\par\vspace{3pt}
\begin{minipage}[t]{0.49\textwidth}
\centering\tiny\setlength{\tabcolsep}{0.5pt}
\renewcommand{\arraystretch}{1.0}
\begin{tabular}{@{}p{0.24\textwidth}@{}p{0.38\textwidth}@{}p{0.38\textwidth}@{}}
\multicolumn{3}{c}{mutual\_funds\_india} \\
\toprule
Objective & \makebox[0.38\textwidth][c]{Single} & \makebox[0.38\textwidth][c]{E5} \\
\midrule
MSE & \makebox[0.38\textwidth][c]{$6.0 \pm 0.1$} & \makebox[0.38\textwidth][c]{$5.83 \pm 0.08$} \\
HL-Gaussian & \makebox[0.38\textwidth][c]{$5.9 \pm 0.1$} & \makebox[0.38\textwidth][c]{$5.72 \pm 0.03$} \\
Quantile Loss & \makebox[0.38\textwidth][c]{$5.9 \pm 0.1$} & \makebox[0.38\textwidth][c]{$5.78 \pm 0.03$} \\
\bottomrule
\end{tabular}
\end{minipage}
\hfill
\begin{minipage}[t]{0.49\textwidth}
\centering\tiny\setlength{\tabcolsep}{0.5pt}
\renewcommand{\arraystretch}{1.0}
\begin{tabular}{@{}p{0.24\textwidth}@{}p{0.38\textwidth}@{}p{0.38\textwidth}@{}}
\multicolumn{3}{c}{obesity\_estimation} \\
\toprule
Objective & \makebox[0.38\textwidth][c]{Single} & \makebox[0.38\textwidth][c]{E5} \\
\midrule
MSE & \makebox[0.38\textwidth][c]{$4.94 \pm 0.02$} & \makebox[0.38\textwidth][c]{$4.919 \pm 0.007$} \\
HL-Gaussian & \makebox[0.38\textwidth][c]{$4.97 \pm 0.04$} & \makebox[0.38\textwidth][c]{$4.94 \pm 0.02$} \\
Quantile Loss & \makebox[0.38\textwidth][c]{$4.90 \pm 0.02$} & \makebox[0.38\textwidth][c]{$4.899 \pm 0.006$} \\
\bottomrule
\end{tabular}
\end{minipage}
\par\vspace{3pt}
\begin{minipage}[t]{0.49\textwidth}
\centering\tiny\setlength{\tabcolsep}{0.5pt}
\renewcommand{\arraystretch}{1.0}
\begin{tabular}{@{}p{0.24\textwidth}@{}p{0.38\textwidth}@{}p{0.38\textwidth}@{}}
\multicolumn{3}{c}{physiochemical\_protein} \\
\toprule
Objective & \makebox[0.38\textwidth][c]{Single} & \makebox[0.38\textwidth][c]{E5} \\
\midrule
MSE & \makebox[0.38\textwidth][c]{$3.52 \pm 0.01$} & \makebox[0.38\textwidth][c]{$3.431 \pm 0.006$} \\
HL-Gaussian & \makebox[0.38\textwidth][c]{$3.43 \pm 0.02$} & \makebox[0.38\textwidth][c]{$3.35 \pm 0.02$} \\
Quantile Loss & \makebox[0.38\textwidth][c]{$3.52 \pm 0.03$} & \makebox[0.38\textwidth][c]{$3.43 \pm 0.02$} \\
\bottomrule
\end{tabular}
\end{minipage}
\hfill
\begin{minipage}[t]{0.49\textwidth}
\centering\tiny\setlength{\tabcolsep}{0.5pt}
\renewcommand{\arraystretch}{1.0}
\begin{tabular}{@{}p{0.24\textwidth}@{}p{0.38\textwidth}@{}p{0.38\textwidth}@{}}
\multicolumn{3}{c}{qsar\_aquatic\_toxicity} \\
\toprule
Objective & \makebox[0.38\textwidth][c]{Single} & \makebox[0.38\textwidth][c]{E5} \\
\midrule
MSE & \makebox[0.38\textwidth][c]{$1.16 \pm 0.02$} & \makebox[0.38\textwidth][c]{$1.149 \pm 0.009$} \\
HL-Gaussian & \makebox[0.38\textwidth][c]{$1.18 \pm 0.02$} & \makebox[0.38\textwidth][c]{$1.17 \pm 0.01$} \\
Quantile Loss & \makebox[0.38\textwidth][c]{$1.18 \pm 0.02$} & \makebox[0.38\textwidth][c]{$1.173 \pm 0.003$} \\
\bottomrule
\end{tabular}
\end{minipage}
\par\vspace{3pt}
\begin{minipage}[t]{0.49\textwidth}
\centering\tiny\setlength{\tabcolsep}{0.5pt}
\renewcommand{\arraystretch}{1.0}
\begin{tabular}{@{}p{0.24\textwidth}@{}p{0.38\textwidth}@{}p{0.38\textwidth}@{}}
\multicolumn{3}{c}{qsar\_fish\_toxicity} \\
\toprule
Objective & \makebox[0.38\textwidth][c]{Single} & \makebox[0.38\textwidth][c]{E5} \\
\midrule
MSE & \makebox[0.38\textwidth][c]{$0.95 \pm 0.02$} & \makebox[0.38\textwidth][c]{$0.938 \pm 0.007$} \\
HL-Gaussian & \makebox[0.38\textwidth][c]{$0.93 \pm 0.02$} & \makebox[0.38\textwidth][c]{$0.905 \pm 0.003$} \\
Quantile Loss & \makebox[0.38\textwidth][c]{$0.973 \pm 0.010$} & \makebox[0.38\textwidth][c]{$0.968 \pm 0.003$} \\
\bottomrule
\end{tabular}
\end{minipage}
\hfill
\begin{minipage}[t]{0.49\textwidth}
\centering\tiny\setlength{\tabcolsep}{0.5pt}
\renewcommand{\arraystretch}{1.0}
\begin{tabular}{@{}p{0.24\textwidth}@{}p{0.38\textwidth}@{}p{0.38\textwidth}@{}}
\multicolumn{3}{c}{qsar\_tid\_11} \\
\toprule
Objective & \makebox[0.38\textwidth][c]{Single} & \makebox[0.38\textwidth][c]{E5} \\
\midrule
MSE & \makebox[0.38\textwidth][c]{$0.813 \pm 0.008$} & \makebox[0.38\textwidth][c]{$0.797 \pm 0.004$} \\
HL-Gaussian & \makebox[0.38\textwidth][c]{$0.804 \pm 0.005$} & \makebox[0.38\textwidth][c]{$0.792 \pm 0.003$} \\
Quantile Loss & \makebox[0.38\textwidth][c]{$0.811 \pm 0.007$} & \makebox[0.38\textwidth][c]{$0.795 \pm 0.002$} \\
\bottomrule
\end{tabular}
\end{minipage}
\par\vspace{3pt}
\begin{minipage}[t]{0.49\textwidth}
\centering\tiny\setlength{\tabcolsep}{0.5pt}
\renewcommand{\arraystretch}{1.0}
\begin{tabular}{@{}p{0.24\textwidth}@{}p{0.38\textwidth}@{}p{0.38\textwidth}@{}}
\multicolumn{3}{c}{rossmann\_store\_sales} \\
\toprule
Objective & \makebox[0.38\textwidth][c]{Single} & \makebox[0.38\textwidth][c]{E5} \\
\midrule
MSE & \makebox[0.38\textwidth][c]{$1474 \pm 126$} & \makebox[0.38\textwidth][c]{$1449 \pm 84$} \\
HL-Gaussian & \makebox[0.38\textwidth][c]{$1420 \pm 37$} & \makebox[0.38\textwidth][c]{$1414 \pm 11$} \\
Quantile Loss & \makebox[0.38\textwidth][c]{$1187 \pm 53$} & \makebox[0.38\textwidth][c]{$1134 \pm 22$} \\
\bottomrule
\end{tabular}
\end{minipage}
\hfill
\begin{minipage}[t]{0.49\textwidth}
\centering\tiny\setlength{\tabcolsep}{0.5pt}
\renewcommand{\arraystretch}{1.0}
\begin{tabular}{@{}p{0.24\textwidth}@{}p{0.38\textwidth}@{}p{0.38\textwidth}@{}}
\multicolumn{3}{c}{santander\_transaction\_value} \\
\toprule
Objective & \makebox[0.38\textwidth][c]{Single} & \makebox[0.38\textwidth][c]{E5} \\
\midrule
MSE & \makebox[0.38\textwidth][c]{$1.43 \pm 0.02$} & \makebox[0.38\textwidth][c]{$1.399 \pm 0.004$} \\
HL-Gaussian & \makebox[0.38\textwidth][c]{$1.407 \pm 0.003$} & \makebox[0.38\textwidth][c]{$1.406 \pm 0.001$} \\
Quantile Loss & \makebox[0.38\textwidth][c]{$1.410 \pm 0.004$} & \makebox[0.38\textwidth][c]{$1.406 \pm 0.002$} \\
\bottomrule
\end{tabular}
\end{minipage}
\par\vspace{3pt}
\begin{minipage}[t]{0.49\textwidth}
\centering\tiny\setlength{\tabcolsep}{0.5pt}
\renewcommand{\arraystretch}{1.0}
\begin{tabular}{@{}p{0.24\textwidth}@{}p{0.38\textwidth}@{}p{0.38\textwidth}@{}}
\multicolumn{3}{c}{sat11\_hand\_algo\_runtime} \\
\toprule
Objective & \makebox[0.38\textwidth][c]{Single} & \makebox[0.38\textwidth][c]{E5} \\
\midrule
MSE & \makebox[0.38\textwidth][c]{$2.1 \pm 0.2$} & \makebox[0.38\textwidth][c]{$1.99 \pm 0.03$} \\
HL-Gaussian & \makebox[0.38\textwidth][c]{$2.05 \pm 0.05$} & \makebox[0.38\textwidth][c]{$1.96 \pm 0.01$} \\
Quantile Loss & \makebox[0.38\textwidth][c]{$2.2 \pm 0.1$} & \makebox[0.38\textwidth][c]{$2.07 \pm 0.07$} \\
\bottomrule
\end{tabular}
\end{minipage}
\hfill
\begin{minipage}[t]{0.49\textwidth}
\centering\tiny\setlength{\tabcolsep}{0.5pt}
\renewcommand{\arraystretch}{1.0}
\begin{tabular}{@{}p{0.24\textwidth}@{}p{0.38\textwidth}@{}p{0.38\textwidth}@{}}
\multicolumn{3}{c}{sberbank\_housing\_market\_forecasting} \\
\toprule
Objective & \makebox[0.38\textwidth][c]{Single} & \makebox[0.38\textwidth][c]{E5} \\
\midrule
MSE & \makebox[0.38\textwidth][c]{$0.26 \pm 0.01$} & \makebox[0.38\textwidth][c]{$0.253 \pm 0.002$} \\
HL-Gaussian & \makebox[0.38\textwidth][c]{$0.253 \pm 0.003$} & \makebox[0.38\textwidth][c]{$0.250 \pm 0.002$} \\
Quantile Loss & \makebox[0.38\textwidth][c]{$0.256 \pm 0.009$} & \makebox[0.38\textwidth][c]{$0.246 \pm 0.002$} \\
\bottomrule
\end{tabular}
\end{minipage}
\par\vspace{3pt}
\begin{minipage}[t]{0.49\textwidth}
\centering\tiny\setlength{\tabcolsep}{0.5pt}
\renewcommand{\arraystretch}{1.0}
\begin{tabular}{@{}p{0.24\textwidth}@{}p{0.38\textwidth}@{}p{0.38\textwidth}@{}}
\multicolumn{3}{c}{sf\_permit\_time} \\
\toprule
Objective & \makebox[0.38\textwidth][c]{Single} & \makebox[0.38\textwidth][c]{E5} \\
\midrule
MSE & \makebox[0.38\textwidth][c]{$1.45 \pm 0.02$} & \makebox[0.38\textwidth][c]{$1.45 \pm 0.01$} \\
HL-Gaussian & \makebox[0.38\textwidth][c]{$1.46 \pm 0.01$} & \makebox[0.38\textwidth][c]{$1.460 \pm 0.004$} \\
Quantile Loss & \makebox[0.38\textwidth][c]{$1.44 \pm 0.01$} & \makebox[0.38\textwidth][c]{$1.440 \pm 0.003$} \\
\bottomrule
\end{tabular}
\end{minipage}
\hfill
\begin{minipage}[t]{0.49\textwidth}
\centering\tiny\setlength{\tabcolsep}{0.5pt}
\renewcommand{\arraystretch}{1.0}
\begin{tabular}{@{}p{0.24\textwidth}@{}p{0.38\textwidth}@{}p{0.38\textwidth}@{}}
\multicolumn{3}{c}{student\_portuguese\_performance} \\
\toprule
Objective & \makebox[0.38\textwidth][c]{Single} & \makebox[0.38\textwidth][c]{E5} \\
\midrule
MSE & \makebox[0.38\textwidth][c]{$3.04 \pm 0.05$} & \makebox[0.38\textwidth][c]{$2.98 \pm 0.04$} \\
HL-Gaussian & \makebox[0.38\textwidth][c]{$2.93 \pm 0.04$} & \makebox[0.38\textwidth][c]{$2.893 \pm 0.006$} \\
Quantile Loss & \makebox[0.38\textwidth][c]{$2.98 \pm 0.04$} & \makebox[0.38\textwidth][c]{$2.929 \pm 0.009$} \\
\bottomrule
\end{tabular}
\end{minipage}
\par\vspace{3pt}
\begin{minipage}[t]{0.49\textwidth}
\centering\tiny\setlength{\tabcolsep}{0.5pt}
\renewcommand{\arraystretch}{1.0}
\begin{tabular}{@{}p{0.24\textwidth}@{}p{0.38\textwidth}@{}p{0.38\textwidth}@{}}
\multicolumn{3}{c}{superconductivity} \\
\toprule
Objective & \makebox[0.38\textwidth][c]{Single} & \makebox[0.38\textwidth][c]{E5} \\
\midrule
MSE & \makebox[0.38\textwidth][c]{$10.2 \pm 0.1$} & \makebox[0.38\textwidth][c]{$9.92 \pm 0.03$} \\
HL-Gaussian & \makebox[0.38\textwidth][c]{$10.2 \pm 0.1$} & \makebox[0.38\textwidth][c]{$9.85 \pm 0.03$} \\
Quantile Loss & \makebox[0.38\textwidth][c]{$10.0 \pm 0.1$} & \makebox[0.38\textwidth][c]{$9.81 \pm 0.06$} \\
\bottomrule
\end{tabular}
\end{minipage}
\hfill
\begin{minipage}[t]{0.49\textwidth}
\centering\tiny\setlength{\tabcolsep}{0.5pt}
\renewcommand{\arraystretch}{1.0}
\begin{tabular}{@{}p{0.24\textwidth}@{}p{0.38\textwidth}@{}p{0.38\textwidth}@{}}
\multicolumn{3}{c}{telemonitoring\_parkinsons\_biomedical\_voice\_measurements} \\
\toprule
Objective & \makebox[0.38\textwidth][c]{Single} & \makebox[0.38\textwidth][c]{E5} \\
\midrule
MSE & \makebox[0.38\textwidth][c]{$11 \pm 2$} & \makebox[0.38\textwidth][c]{$11.0 \pm 0.7$} \\
HL-Gaussian & \makebox[0.38\textwidth][c]{$10.9 \pm 0.7$} & \makebox[0.38\textwidth][c]{$10.4 \pm 0.4$} \\
Quantile Loss & \makebox[0.38\textwidth][c]{$10.6 \pm 0.6$} & \makebox[0.38\textwidth][c]{$10.2 \pm 0.3$} \\
\bottomrule
\end{tabular}
\end{minipage}
\par\vspace{3pt}
\begin{minipage}[t]{0.49\textwidth}
\centering\tiny\setlength{\tabcolsep}{0.5pt}
\renewcommand{\arraystretch}{1.0}
\begin{tabular}{@{}p{0.24\textwidth}@{}p{0.38\textwidth}@{}p{0.38\textwidth}@{}}
\multicolumn{3}{c}{video\_game\_fps\_prediction} \\
\toprule
Objective & \makebox[0.38\textwidth][c]{Single} & \makebox[0.38\textwidth][c]{E5} \\
\midrule
MSE & \makebox[0.38\textwidth][c]{$0.0145 \pm 0.0009$} & \makebox[0.38\textwidth][c]{$0.0136 \pm 0.0004$} \\
HL-Gaussian & \makebox[0.38\textwidth][c]{$0.0154 \pm 0.0010$} & \makebox[0.38\textwidth][c]{$0.0151 \pm 0.0004$} \\
Quantile Loss & \makebox[0.38\textwidth][c]{$0.012 \pm 0.001$} & \makebox[0.38\textwidth][c]{$0.0101 \pm 0.0004$} \\
\bottomrule
\end{tabular}
\end{minipage}
\hfill
\begin{minipage}[t]{0.49\textwidth}
\centering\tiny\setlength{\tabcolsep}{0.5pt}
\renewcommand{\arraystretch}{1.0}
\begin{tabular}{@{}p{0.24\textwidth}@{}p{0.38\textwidth}@{}p{0.38\textwidth}@{}}
\multicolumn{3}{c}{video\_transcoding\_time\_prediction} \\
\toprule
Objective & \makebox[0.38\textwidth][c]{Single} & \makebox[0.38\textwidth][c]{E5} \\
\midrule
MSE & \makebox[0.38\textwidth][c]{$0.273 \pm 0.007$} & \makebox[0.38\textwidth][c]{$0.263 \pm 0.005$} \\
HL-Gaussian & \makebox[0.38\textwidth][c]{$0.264 \pm 0.007$} & \makebox[0.38\textwidth][c]{$0.258 \pm 0.002$} \\
Quantile Loss & \makebox[0.38\textwidth][c]{$0.264 \pm 0.005$} & \makebox[0.38\textwidth][c]{$0.259 \pm 0.002$} \\
\bottomrule
\end{tabular}
\end{minipage}
\par\vspace{3pt}
\begin{minipage}[t]{0.49\textwidth}
\centering\tiny\setlength{\tabcolsep}{0.5pt}
\renewcommand{\arraystretch}{1.0}
\begin{tabular}{@{}p{0.24\textwidth}@{}p{0.38\textwidth}@{}p{0.38\textwidth}@{}}
\multicolumn{3}{c}{wine\_quality} \\
\toprule
Objective & \makebox[0.38\textwidth][c]{Single} & \makebox[0.38\textwidth][c]{E5} \\
\midrule
MSE & \makebox[0.38\textwidth][c]{$0.677 \pm 0.002$} & \makebox[0.38\textwidth][c]{$0.672 \pm 0.001$} \\
HL-Gaussian & \makebox[0.38\textwidth][c]{$0.670 \pm 0.003$} & \makebox[0.38\textwidth][c]{$0.665 \pm 0.002$} \\
Quantile Loss & \makebox[0.38\textwidth][c]{$0.674 \pm 0.003$} & \makebox[0.38\textwidth][c]{$0.6655 \pm 0.0003$} \\
\bottomrule
\end{tabular}
\end{minipage}
\hfill
\begin{minipage}[t]{0.49\textwidth}
\centering\tiny\setlength{\tabcolsep}{0.5pt}
\renewcommand{\arraystretch}{1.0}
\begin{tabular}{@{}p{0.24\textwidth}@{}p{0.38\textwidth}@{}p{0.38\textwidth}@{}}
\multicolumn{3}{c}{wine\_world\_cost} \\
\toprule
Objective & \makebox[0.38\textwidth][c]{Single} & \makebox[0.38\textwidth][c]{E5} \\
\midrule
MSE & \makebox[0.38\textwidth][c]{$0.402 \pm 0.008$} & \makebox[0.38\textwidth][c]{$0.395 \pm 0.004$} \\
HL-Gaussian & \makebox[0.38\textwidth][c]{$0.409 \pm 0.005$} & \makebox[0.38\textwidth][c]{$0.401 \pm 0.001$} \\
Quantile Loss & \makebox[0.38\textwidth][c]{$0.407 \pm 0.007$} & \makebox[0.38\textwidth][c]{$0.399 \pm 0.002$} \\
\bottomrule
\end{tabular}
\end{minipage}
\end{table*}

\begin{table*}[p]
\centering
\caption{Extended Beyond results with the Muon optimizer. Single denotes one model; E5 denotes an ensemble of five.}
\label{tab:extended-beyond-results}
\begin{minipage}[t]{0.49\textwidth}
\centering\tiny\setlength{\tabcolsep}{0.5pt}\renewcommand{\arraystretch}{1.0}
\begin{tabular}{llcc}
\multicolumn{4}{c}{5g\_energy\_consumption} \\
\toprule
Model & Objective & Single & E5 \\
\midrule
{MLP} & MSE & $4.07 \pm 0.03$ & $4.02 \pm 0.01$ \\
 & MAE & $4.17 \pm 0.03$ & $4.13 \pm 0.01$ \\
 & Huber Loss & $4.15 \pm 0.05$ & $4.09 \pm 0.02$ \\
 & Soft Binning & $4.13 \pm 0.03$ & $4.09 \pm 0.02$ \\
 & Hard Binning & $4.06 \pm 0.03$ & $4.054 \pm 0.007$ \\
 & Joint End-To-End & $4.10 \pm 0.04$ & $4.06 \pm 0.02$ \\
 & Decoder[N] & $4.48 \pm 0.06$ & $4.22 \pm 0.05$ \\
 & HL-Gaussian & $4.13 \pm 0.04$ & $4.080 \pm 0.004$ \\
 & Decoder[U] & $2753 \pm 10647$ & $2752 \pm 4759$ \\
 & Softmax MSE & $4.14 \pm 0.03$ & $4.08 \pm 0.02$ \\
 & Ordinal Entropy & $4.11 \pm 0.05$ & $4.056 \pm 0.008$ \\
 & ROT-REG & $4.06 \pm 0.04$ & $4.00 \pm 0.03$ \\
 & GAR [MAE] & $4.05 \pm 0.04$ & $4.01 \pm 0.01$ \\
 & GAR [MSE] & $4.08 \pm 0.05$ & $4.020 \pm 0.007$ \\
 & MDN & $4.04 \pm 0.03$ & $4.008 \pm 0.010$ \\
 & NLL & $4.09 \pm 0.04$ & $4.03 \pm 0.01$ \\
 & beta-NLL & $4.08 \pm 0.03$ & $4.039 \pm 0.005$ \\
 & AdaQR & $4.04 \pm 0.01$ & $4.017 \pm 0.002$ \\
 & Quantile Loss & $4.06 \pm 0.03$ & $4.026 \pm 0.006$ \\
\cmidrule(lr){1-4}
{TabPack} & MSE & $4.03 \pm 0.02$ & -- \\
 & HL-Gaussian & $4.06 \pm 0.03$ & -- \\
 & Quantile Loss & $4.01 \pm 0.02$ & -- \\
 & AdaQR & $4.01 \pm 0.02$ & -- \\
\cmidrule(lr){1-4}
{RealMLP} & MSE & $4.04 \pm 0.02$ & $4.027 \pm 0.008$ \\
 & HL-Gaussian & $4.12 \pm 0.03$ & $4.11 \pm 0.02$ \\
 & Quantile Loss & $4.06 \pm 0.02$ & $4.04 \pm 0.01$ \\
 & $\text{AdaQR}^{\dagger}$ & $4.03 \pm 0.03$ & $4.01 \pm 0.02$ \\
 & AdaQR w. global $\sigma$ & $4.03 \pm 0.02$ & $4.01 \pm 0.02$ \\
\cmidrule(lr){1-4}
{TabM} & MSE & $4.01 \pm 0.03$ & $3.99 \pm 0.01$ \\
 & HL-Gaussian & $3.99 \pm 0.02$ & $3.98 \pm 0.01$ \\
 & Quantile Loss & $4.01 \pm 0.04$ & $3.98 \pm 0.02$ \\
 & AdaQR & $4.00 \pm 0.02$ & $3.986 \pm 0.006$ \\
\bottomrule
\end{tabular}
\end{minipage}
\hfill
\begin{minipage}[t]{0.49\textwidth}
\centering\tiny\setlength{\tabcolsep}{0.5pt}\renewcommand{\arraystretch}{1.0}
\begin{tabular}{llcc}
\multicolumn{4}{c}{airfoil\_self\_noise} \\
\toprule
Model & Objective & Single & E5 \\
\midrule
{MLP} & MSE & $1.85 \pm 0.05$ & $1.79 \pm 0.02$ \\
 & MAE & $1.89 \pm 0.07$ & $1.82 \pm 0.02$ \\
 & Huber Loss & $1.91 \pm 0.05$ & $1.85 \pm 0.01$ \\
 & Soft Binning & $1.89 \pm 0.06$ & $1.804 \pm 0.008$ \\
 & Hard Binning & $2.19 \pm 0.02$ & $2.172 \pm 0.008$ \\
 & Joint End-To-End & $1.93 \pm 0.04$ & $1.86 \pm 0.01$ \\
 & Decoder[N] & $2.11 \pm 0.08$ & $1.944 \pm 0.005$ \\
 & HL-Gaussian & $1.90 \pm 0.05$ & $1.84 \pm 0.03$ \\
 & Decoder[U] & $4 \pm 2$ & $3 \pm 1$ \\
 & Softmax MSE & $1.86 \pm 0.06$ & $1.81 \pm 0.04$ \\
 & Ordinal Entropy & -- & -- \\
 & ROT-REG & $1.82 \pm 0.05$ & $1.77 \pm 0.02$ \\
 & GAR [MAE] & -- & -- \\
 & GAR [MSE] & -- & -- \\
 & MDN & $1.89 \pm 0.07$ & $1.825 \pm 0.007$ \\
 & NLL & $1.91 \pm 0.06$ & $1.81 \pm 0.01$ \\
 & beta-NLL & $1.88 \pm 0.06$ & $1.84 \pm 0.03$ \\
 & AdaQR & $1.86 \pm 0.05$ & $1.83 \pm 0.03$ \\
 & Quantile Loss & $1.86 \pm 0.05$ & $1.815 \pm 0.007$ \\
\cmidrule(lr){1-4}
{TabPack} & MSE & $1.70 \pm 0.03$ & -- \\
 & HL-Gaussian & $1.77 \pm 0.02$ & -- \\
 & Quantile Loss & $1.71 \pm 0.03$ & -- \\
 & AdaQR & $1.69 \pm 0.02$ & -- \\
\cmidrule(lr){1-4}
{RealMLP} & MSE & $1.55 \pm 0.02$ & $1.53 \pm 0.01$ \\
 & HL-Gaussian & $1.50 \pm 0.01$ & $1.491 \pm 0.007$ \\
 & Quantile Loss & $1.43 \pm 0.02$ & $1.43 \pm 0.01$ \\
 & $\text{AdaQR}^{\dagger}$ & $1.47 \pm 0.02$ & $1.460 \pm 0.007$ \\
 & AdaQR w. global $\sigma$ & $1.53 \pm 0.01$ & $1.529 \pm 0.009$ \\
\cmidrule(lr){1-4}
{TabM} & MSE & $1.64 \pm 0.03$ & $1.61 \pm 0.01$ \\
 & HL-Gaussian & $1.63 \pm 0.02$ & $1.60 \pm 0.01$ \\
 & Quantile Loss & $1.68 \pm 0.03$ & $1.64 \pm 0.01$ \\
 & AdaQR & $1.62 \pm 0.02$ & $1.60 \pm 0.01$ \\
\bottomrule
\end{tabular}
\end{minipage}
\end{table*}

\begin{table*}[p]
\centering
\caption{Extended Beyond results with the Muon optimizer (continued).}
\begin{minipage}[t]{0.49\textwidth}
\centering\tiny\setlength{\tabcolsep}{0.5pt}\renewcommand{\arraystretch}{1.0}
\begin{tabular}{llcc}
\multicolumn{4}{c}{allstate\_claims\_severity} \\
\toprule
Model & Objective & Single & E5 \\
\midrule
{MLP} & MSE & $0.5400 \pm 0.0002$ & $0.5388 \pm 0.0001$ \\
 & MAE & $0.5408 \pm 0.0001$ & $0.5402 \pm 0.0000$ \\
 & Huber Loss & $0.5393 \pm 0.0002$ & $0.5388 \pm 0.0000$ \\
 & Soft Binning & $0.5378 \pm 0.0001$ & $0.5373 \pm 0.0000$ \\
 & Hard Binning & $0.5393 \pm 0.0001$ & $0.5389 \pm 0.0000$ \\
 & Joint End-To-End & $0.5394 \pm 0.0002$ & $0.5387 \pm 0.0000$ \\
\bottomrule
\end{tabular}
\end{minipage}
\hfill
\begin{minipage}[t]{0.49\textwidth}
\centering\tiny\setlength{\tabcolsep}{0.5pt}\renewcommand{\arraystretch}{1.0}
\begin{tabular}{llcc}
\multicolumn{4}{c}{blood\_tests\_drink\_prediction} \\
\toprule
Model & Objective & Single & E5 \\
\midrule
{MLP} & MSE & $0.667 \pm 0.009$ & $0.666 \pm 0.006$ \\
 & MAE & $0.68 \pm 0.01$ & $0.682 \pm 0.004$ \\
 & Huber Loss & $0.68 \pm 0.02$ & $0.677 \pm 0.005$ \\
 & Soft Binning & $0.667 \pm 0.005$ & $0.666 \pm 0.003$ \\
 & Hard Binning & $0.70 \pm 0.02$ & $0.693 \pm 0.003$ \\
 & Joint End-To-End & $0.671 \pm 0.006$ & $0.670 \pm 0.002$ \\
 & Decoder[N] & $0.86 \pm 0.04$ & $0.76 \pm 0.01$ \\
 & HL-Gaussian & $0.652 \pm 0.008$ & $0.6506 \pm 0.0008$ \\
 & Decoder[U] & $0.74 \pm 0.03$ & $0.699 \pm 0.009$ \\
 & Softmax MSE & $0.669 \pm 0.006$ & $0.668 \pm 0.004$ \\
 & Ordinal Entropy & $0.69 \pm 0.02$ & $0.678 \pm 0.003$ \\
 & ROT-REG & $0.662 \pm 0.010$ & $0.660 \pm 0.005$ \\
 & GAR [MAE] & $0.68 \pm 0.02$ & $0.674 \pm 0.005$ \\
 & GAR [MSE] & $0.674 \pm 0.008$ & $0.672 \pm 0.005$ \\
 & MDN & $0.66 \pm 0.01$ & $0.655 \pm 0.007$ \\
 & NLL & $0.674 \pm 0.008$ & $0.671 \pm 0.005$ \\
 & beta-NLL & $0.674 \pm 0.008$ & $0.668 \pm 0.008$ \\
 & AdaQR & $0.687 \pm 0.008$ & $0.685 \pm 0.002$ \\
 & Quantile Loss & $0.678 \pm 0.006$ & $0.678 \pm 0.004$ \\
\cmidrule(lr){1-4}
{TabPack} & MSE & $0.66 \pm 0.01$ & -- \\
 & HL-Gaussian & $0.668 \pm 0.009$ & -- \\
 & Quantile Loss & $0.654 \pm 0.005$ & -- \\
 & AdaQR & $0.66 \pm 0.01$ & -- \\
\cmidrule(lr){1-4}
{RealMLP} & MSE & $0.74 \pm 0.03$ & $0.693 \pm 0.005$ \\
 & HL-Gaussian & $0.666 \pm 0.005$ & $0.665 \pm 0.002$ \\
 & Quantile Loss & $0.670 \pm 0.008$ & $0.668 \pm 0.002$ \\
 & $\text{AdaQR}^{\dagger}$ & $0.68 \pm 0.01$ & $0.668 \pm 0.006$ \\
 & AdaQR w. global $\sigma$ & $0.666 \pm 0.004$ & $0.666 \pm 0.002$ \\
\cmidrule(lr){1-4}
{TabM} & MSE & $0.671 \pm 0.008$ & $0.670 \pm 0.001$ \\
 & HL-Gaussian & $0.67 \pm 0.01$ & $0.666 \pm 0.004$ \\
 & Quantile Loss & $0.675 \pm 0.007$ & $0.672 \pm 0.003$ \\
 & AdaQR & $0.67 \pm 0.01$ & $0.672 \pm 0.006$ \\
\bottomrule
\end{tabular}
\end{minipage}
\end{table*}

\begin{table*}[p]
\centering
\caption{Extended Beyond results with the Muon optimizer (continued).}
\begin{minipage}[t]{0.49\textwidth}
\centering\tiny\setlength{\tabcolsep}{0.5pt}\renewcommand{\arraystretch}{1.0}
\begin{tabular}{llcc}
\multicolumn{4}{c}{body\_density\_prediction} \\
\toprule
Model & Objective & Single & E5 \\
\midrule
{MLP} & MSE & $0.013 \pm 0.002$ & $0.0127 \pm 0.0002$ \\
 & MAE & $0.0121 \pm 0.0002$ & $0.0119 \pm 0.0001$ \\
 & Huber Loss & $0.0125 \pm 0.0004$ & $0.0123 \pm 0.0000$ \\
 & Soft Binning & $0.0137 \pm 0.0002$ & $0.0137 \pm 0.0001$ \\
 & Hard Binning & $0.0153 \pm 0.0008$ & $0.0143 \pm 0.0004$ \\
 & Joint End-To-End & $0.0136 \pm 0.0002$ & $0.0133 \pm 0.0001$ \\
\bottomrule
\end{tabular}
\end{minipage}
\hfill
\begin{minipage}[t]{0.49\textwidth}
\centering\tiny\setlength{\tabcolsep}{0.5pt}\renewcommand{\arraystretch}{1.0}
\begin{tabular}{llcc}
\multicolumn{4}{c}{california\_house\_prices\_2020} \\
\toprule
Model & Objective & Single & E5 \\
\midrule
{MLP} & MSE & $0.274 \pm 0.002$ & $0.2710 \pm 0.0008$ \\
 & MAE & $0.260 \pm 0.002$ & $0.2573 \pm 0.0008$ \\
 & Huber Loss & $0.267 \pm 0.002$ & $0.258 \pm 0.001$ \\
 & Soft Binning & $0.257 \pm 0.002$ & $0.2531 \pm 0.0006$ \\
 & Hard Binning & $0.306 \pm 0.003$ & $0.294 \pm 0.002$ \\
 & Joint End-To-End & $0.275 \pm 0.002$ & $0.2718 \pm 0.0009$ \\
\bottomrule
\end{tabular}
\end{minipage}
\end{table*}

\begin{table*}[p]
\centering
\caption{Extended Beyond results with the Muon optimizer (continued).}
\begin{minipage}[t]{0.49\textwidth}
\centering\tiny\setlength{\tabcolsep}{0.5pt}\renewcommand{\arraystretch}{1.0}
\begin{tabular}{llcc}
\multicolumn{4}{c}{cirrhosis\_patient\_survival\_prediction} \\
\toprule
Model & Objective & Single & E5 \\
\midrule
{MLP} & MSE & $0.72 \pm 0.02$ & $0.705 \pm 0.005$ \\
 & MAE & $0.66 \pm 0.02$ & $0.655 \pm 0.008$ \\
 & Huber Loss & $0.67 \pm 0.01$ & $0.668 \pm 0.005$ \\
 & Soft Binning & $0.69 \pm 0.01$ & $0.693 \pm 0.006$ \\
 & Hard Binning & $0.93 \pm 0.07$ & $0.83 \pm 0.02$ \\
 & Joint End-To-End & $0.74 \pm 0.01$ & $0.734 \pm 0.002$ \\
 & Decoder[N] & $1.0 \pm 0.2$ & $0.80 \pm 0.08$ \\
 & HL-Gaussian & $0.78 \pm 0.03$ & $0.779 \pm 0.008$ \\
 & Decoder[U] & $0.83 \pm 0.06$ & $0.76 \pm 0.02$ \\
 & Softmax MSE & $0.73 \pm 0.01$ & $0.725 \pm 0.002$ \\
 & Ordinal Entropy & $0.72 \pm 0.02$ & $0.712 \pm 0.005$ \\
 & ROT-REG & $0.76 \pm 0.04$ & $0.73 \pm 0.01$ \\
 & GAR [MAE] & $0.68 \pm 0.01$ & $0.678 \pm 0.003$ \\
 & GAR [MSE] & $0.67 \pm 0.02$ & $0.671 \pm 0.007$ \\
 & MDN & $0.68 \pm 0.02$ & $0.680 \pm 0.006$ \\
 & NLL & $0.70 \pm 0.02$ & $0.69 \pm 0.01$ \\
 & beta-NLL & $0.64 \pm 0.01$ & $0.636 \pm 0.002$ \\
 & AdaQR & $0.680 \pm 0.010$ & $0.679 \pm 0.002$ \\
 & Quantile Loss & $0.68 \pm 0.01$ & $0.681 \pm 0.005$ \\
\cmidrule(lr){1-4}
{TabPack} & MSE & $0.77 \pm 0.03$ & -- \\
 & HL-Gaussian & $0.91 \pm 0.08$ & -- \\
 & Quantile Loss & $0.75 \pm 0.02$ & -- \\
 & AdaQR & $0.72 \pm 0.03$ & -- \\
\cmidrule(lr){1-4}
{RealMLP} & MSE & $0.70 \pm 0.06$ & $0.68 \pm 0.04$ \\
 & HL-Gaussian & $0.78 \pm 0.05$ & $0.74 \pm 0.03$ \\
 & Quantile Loss & $0.658 \pm 0.009$ & $0.656 \pm 0.001$ \\
 & $\text{AdaQR}^{\dagger}$ & $0.70 \pm 0.05$ & $0.69 \pm 0.01$ \\
 & AdaQR w. global $\sigma$ & $0.634 \pm 0.009$ & $0.633 \pm 0.003$ \\
\cmidrule(lr){1-4}
{TabM} & MSE & $0.649 \pm 0.007$ & $0.646 \pm 0.001$ \\
 & HL-Gaussian & $0.67 \pm 0.01$ & $0.671 \pm 0.006$ \\
 & Quantile Loss & $0.68 \pm 0.01$ & $0.676 \pm 0.005$ \\
 & AdaQR & $0.68 \pm 0.01$ & $0.681 \pm 0.001$ \\
\bottomrule
\end{tabular}
\end{minipage}
\hfill
\begin{minipage}[t]{0.49\textwidth}
\centering\tiny\setlength{\tabcolsep}{0.5pt}\renewcommand{\arraystretch}{1.0}
\begin{tabular}{llcc}
\multicolumn{4}{c}{climate\_model\_weather\_forecasting\_1m} \\
\toprule
Model & Objective & Single & E5 \\
\midrule
{MLP} & MSE & $1.474 \pm 0.009$ & $1.468 \pm 0.006$ \\
 & MAE & $1.454 \pm 0.008$ & $1.440 \pm 0.006$ \\
 & Huber Loss & $1.47 \pm 0.01$ & $1.458 \pm 0.004$ \\
 & Soft Binning & $1.470 \pm 0.008$ & $1.460 \pm 0.001$ \\
 & Hard Binning & $1.448 \pm 0.007$ & $1.434 \pm 0.003$ \\
 & Joint End-To-End & $1.46 \pm 0.01$ & $1.443 \pm 0.007$ \\
 & Decoder[N] & $1.51 \pm 0.01$ & $1.455 \pm 0.008$ \\
 & HL-Gaussian & $1.467 \pm 0.008$ & $1.455 \pm 0.004$ \\
 & Decoder[U] & $1.49 \pm 0.01$ & $1.444 \pm 0.005$ \\
 & Softmax MSE & $1.49 \pm 0.01$ & $1.473 \pm 0.005$ \\
 & Ordinal Entropy & $1.50 \pm 0.01$ & $1.481 \pm 0.002$ \\
 & ROT-REG & $1.470 \pm 0.008$ & $1.463 \pm 0.005$ \\
 & GAR [MAE] & $1.451 \pm 0.008$ & $1.442 \pm 0.003$ \\
 & GAR [MSE] & $1.476 \pm 0.009$ & $1.468 \pm 0.002$ \\
 & MDN & $1.45 \pm 0.01$ & $1.437 \pm 0.004$ \\
 & NLL & $1.45 \pm 0.01$ & $1.436 \pm 0.005$ \\
 & beta-NLL & $1.436 \pm 0.008$ & $1.4259 \pm 0.0008$ \\
 & AdaQR & $1.46 \pm 0.01$ & $1.444 \pm 0.008$ \\
 & Quantile Loss & $1.45 \pm 0.01$ & $1.444 \pm 0.005$ \\
\cmidrule(lr){1-4}
{TabPack} & MSE & $1.376 \pm 0.004$ & -- \\
 & HL-Gaussian & $1.373 \pm 0.004$ & -- \\
 & Quantile Loss & $1.365 \pm 0.003$ & -- \\
 & AdaQR & $1.368 \pm 0.003$ & -- \\
\cmidrule(lr){1-4}
{RealMLP} & MSE & $1.402 \pm 0.005$ & $1.389 \pm 0.003$ \\
 & HL-Gaussian & $1.372 \pm 0.003$ & $1.365 \pm 0.002$ \\
 & Quantile Loss & $1.379 \pm 0.005$ & $1.371 \pm 0.002$ \\
 & $\text{AdaQR}^{\dagger}$ & $1.380 \pm 0.006$ & $1.372 \pm 0.002$ \\
 & AdaQR w. global $\sigma$ & $1.370 \pm 0.003$ & $1.364 \pm 0.001$ \\
\cmidrule(lr){1-4}
{TabM} & MSE & $1.394 \pm 0.009$ & $1.388 \pm 0.002$ \\
 & HL-Gaussian & $1.382 \pm 0.005$ & $1.375 \pm 0.005$ \\
 & Quantile Loss & $1.368 \pm 0.009$ & $1.356 \pm 0.003$ \\
 & AdaQR & $1.359 \pm 0.004$ & $1.354 \pm 0.003$ \\
\bottomrule
\end{tabular}
\end{minipage}
\end{table*}

\begin{table*}[p]
\centering
\caption{Extended Beyond results with the Muon optimizer (continued).}
\begin{minipage}[t]{0.49\textwidth}
\centering\tiny\setlength{\tabcolsep}{0.5pt}\renewcommand{\arraystretch}{1.0}
\begin{tabular}{llcc}
\multicolumn{4}{c}{coffee\_rating\_prediction} \\
\toprule
Model & Objective & Single & E5 \\
\midrule
{MLP} & MSE & $1.48 \pm 0.02$ & $1.4771 \pm 0.0003$ \\
 & MAE & $1.48 \pm 0.05$ & $1.46 \pm 0.02$ \\
 & Huber Loss & $1.54 \pm 0.07$ & $1.51 \pm 0.02$ \\
 & Soft Binning & $1.43 \pm 0.04$ & $1.406 \pm 0.008$ \\
 & Hard Binning & $1.53 \pm 0.04$ & $1.53 \pm 0.02$ \\
 & Joint End-To-End & $1.51 \pm 0.02$ & $1.503 \pm 0.008$ \\
 & Decoder[N] & $1.41 \pm 0.05$ & $1.36 \pm 0.02$ \\
 & HL-Gaussian & $1.40 \pm 0.02$ & $1.390 \pm 0.005$ \\
 & Decoder[U] & $1.5 \pm 0.1$ & $1.44 \pm 0.03$ \\
 & Softmax MSE & $1.44 \pm 0.02$ & $1.424 \pm 0.005$ \\
 & Ordinal Entropy & $1.50 \pm 0.03$ & $1.495 \pm 0.007$ \\
 & ROT-REG & $1.48 \pm 0.02$ & $1.465 \pm 0.006$ \\
 & GAR [MAE] & $1.54 \pm 0.06$ & $1.52 \pm 0.02$ \\
 & GAR [MSE] & $1.53 \pm 0.05$ & $1.50 \pm 0.03$ \\
 & MDN & $1.56 \pm 0.02$ & $1.553 \pm 0.003$ \\
 & NLL & $1.49 \pm 0.05$ & $1.47 \pm 0.02$ \\
 & beta-NLL & $1.54 \pm 0.03$ & $1.5312 \pm 0.0010$ \\
 & AdaQR & $1.50 \pm 0.07$ & $1.48 \pm 0.03$ \\
 & Quantile Loss & $1.57 \pm 0.05$ & $1.56 \pm 0.01$ \\
\cmidrule(lr){1-4}
{TabPack} & MSE & $1.34 \pm 0.02$ & -- \\
 & HL-Gaussian & $1.36 \pm 0.02$ & -- \\
 & Quantile Loss & $1.33 \pm 0.02$ & -- \\
 & AdaQR & $1.34 \pm 0.02$ & -- \\
\cmidrule(lr){1-4}
{RealMLP} & MSE & $1.31 \pm 0.04$ & $1.291 \pm 0.010$ \\
 & HL-Gaussian & $1.28 \pm 0.02$ & $1.276 \pm 0.009$ \\
 & Quantile Loss & $1.27 \pm 0.01$ & $1.269 \pm 0.007$ \\
 & $\text{AdaQR}^{\dagger}$ & $1.259 \pm 0.007$ & $1.252 \pm 0.003$ \\
 & AdaQR w. global $\sigma$ & $1.32 \pm 0.02$ & $1.316 \pm 0.005$ \\
\cmidrule(lr){1-4}
{TabM} & MSE & $1.33 \pm 0.02$ & $1.311 \pm 0.006$ \\
 & HL-Gaussian & $1.34 \pm 0.03$ & $1.33 \pm 0.01$ \\
 & Quantile Loss & $1.29 \pm 0.04$ & $1.28 \pm 0.02$ \\
 & AdaQR & $1.30 \pm 0.02$ & $1.289 \pm 0.009$ \\
\bottomrule
\end{tabular}%
\end{minipage}
\end{table*}

\end{document}